\documentclass[11pt]{amsart}
\usepackage{fullpage}
\usepackage{lipsum}                     % Dummytext
\usepackage{xargs}
\usepackage{xfrac}
\usepackage{float}
\usepackage{hyperref} 
\usepackage{lipsum}                     % Dummytext
\usepackage{xargs}
\usepackage{xfrac}
\usepackage{amssymb}
\usepackage{amsmath}

\usepackage{mathtools}
\usepackage{algorithm2e}
\usepackage{dsfont}

\newcommand{\tr}{\text{Tr}}

% Vectors

\newcommand{\Normal}{\mathcal{N}}
\def\1{\mathds{1}}

\newcommand{\law}{\mathrm{law}}

 % Hessian

\newcommand{\bigo}{\mathcal{O}}

\usepackage{amsmath,amssymb}

% Matrices

%\newcommand{\Gamma}{{\bf \gamma}}

\newcommand{\ones}{{\bf 1}}

\newcommand{\W}{{\mathcal W}}
\newcommand{\E}{{\mathbb E}}

\newcommand{\N}{\mathcal{N}}
\newcommand{\R}{{\mathbb{R}}}

\newcommand{\diag}{\text{diag}}

\usepackage{mdframed}
\usepackage{algorithmicx}

\usepackage{eqparbox}
\newcommand{\comment}[1]{}

\usepackage{amsthm}

\newtheorem{theorem}{Theorem}
\newtheorem{lemma}[theorem]{Lemma}
\newtheorem{corollary}[theorem]{Corollary}

\newtheorem*{assumption*}{\assumptionnumber}
\providecommand{\assumptionnumber}{}
\makeatletter
\newenvironment{assumptioncost}[2]
 {%
  \renewcommand{\assumptionnumber}{Assumption $\mathcal{A}_#1(#2)$}%
  \begin{assumption*}%
  \protected@edef\@currentlabel{$\mathcal{A}_#1$}%
 }
 {%
  \end{assumption*}
 }
 \makeatother

\usepackage{thmtools,thm-restate}

\usepackage{eqparbox}
%\usepackage[backend=bibtex,style=authoryear]{biblatex}
% ready for submission
% \usepackage{neurips_2021}
\usepackage[english]{babel}
\usepackage{natbib}
\usepackage{caption}
\usepackage{subcaption}
\usepackage{amsmath,amsthm, amssymb, bbm,  color}
\usepackage{mathbbol}
\usepackage{graphicx}
\usepackage{fullpage}
\usepackage{amsmath,tkz-euclide}
\usepackage{amsmath}
\usepackage{enumitem}
\usepackage{tikz}

\setlength{\marginparwidth}{.8in}
\let\oldmarginpar\marginpar
\renewcommand\marginpar[1]{\-\oldmarginpar[\raggedleft\scriptsize #1]%
{\raggedright\scriptsize #1}}

\title{Batch Normalization Orthogonalizes Representations in Deep Random Networks}

 \author{Hadi Daneshmand \and Amir Joudaki \and Francis Bach }
\thanks{INRIA Paris, ETH Zurich, INRIA-ENS-PSL Paris.}
\begin{document}

%% one-sentence summary: 
% We prove that successive batch normalizations, together with random linear layers, incrementally orthogonalize representations of samples.

\maketitle

\begin{abstract}
This paper underlines a subtle property of batch-normalization (BN): Successive batch normalizations with random linear transformations make hidden representations increasingly orthogonal across layers of a deep neural network. We establish a non-asymptotic characterization of the interplay between depth, width, and the orthogonality of deep representations. More precisely, under a mild assumption, we prove that the deviation of the representations from orthogonality rapidly decays with depth up to a term inversely proportional to the network width. This result has two main implications: 1) Theoretically, as the depth grows, the distribution of the representation --after the linear layers-- contracts to a Wasserstein-2 ball around an isotropic Gaussian distribution. Furthermore, the radius of this Wasserstein ball shrinks with the width of the network. 2) In practice, the orthogonality of the representations directly influences the performance of stochastic gradient descent (SGD). When representations are initially aligned, we observe SGD wastes many iterations to orthogonalize representations before the classification. Nevertheless, we experimentally show that starting optimization from orthogonal representations is sufficient to accelerate SGD, with no need for BN.
\end{abstract}

\section{Introduction}

Batch Normalization (BN)~\citep{ioffe2015batch} enhances training across a wide range of deep network architectures and experimental setups~\citep{he2016deep,huang2017densely,silver2017mastering}. The practical success of BN has inspired research into the underlying benefits of BN~\citep{santurkar2018does,karakida2019normalization,arora2018theoretical,bjorck2018understanding}. In particular, it shown that BN influences first-order optimization methods by avoiding the rank collapse in deep representation \citep{daneshmand2020batch},  direction-length decoupling of optimization \citep{kohler2018exponential}, influencing the learning rate of the steepest descent  \citep{arora2018theoretical,bjorck2018understanding}, and smoothing the optimization objective function \citep{santurkar2018does,karakida2019normalization}. However, the benefits of BN go beyond its critical role in optimization. For example, \cite{frankle2020training} shows that BN networks with random weights also achieve surprisingly high performance after only minor adjustments of their weights. This striking result motivates us to study the representational power of random networks with BN.

We study hidden representations across layers of a laboratory random BN with linear activations. Consider a batch of samples passing through consecutive BN and linear layers with Gaussian weights. The representations of these samples are perturbed by each random linear transformation, followed by a non-linear BN. At first glance, the deep representations appear unpredictable after many stochastic and non-linear transformations. Yet, we show that these transformations orthogonalize the representations. To prove this statement, we introduce the notion of  ``orthogonality gap'', defined in Section \ref{sec:orthogonality}, to quantify the deviation of representations from a perfectly orthogonal representation. Then, we prove that the orthogonality gap decays exponentially with the network depth and stabilizes around a term inversely related to the network width. More precisely, we prove 
\begin{align} \nonumber
     \E\Bigg[ \text{orthogonality gap} \Bigg] = \bigo\left( \left(1-\alpha\right)^{\text{depth}}+ \frac{\text{batch size}}{\alpha\sqrt{\text{width}}}\right)
\end{align}
holds for $\alpha>0$ that is an absolute constant under a mild assumption. In probability theoretic terms, we prove stochastic stability of the Markov chain of hidden representations \citep{kushner1967stochastic,kushner2003stochastic,khasminskii2011stochastic}. The orthogonality of deep representations allows us to prove that the distribution of the representations after linear layers contracts to a Wasserstein-2 ball around isotropic Gaussian distribution as the network depth grows. Moreover, the radius of the ball is inversely proportional to the network width. Omitting details, we prove the following bound holds:
\begin{align} \nonumber
    \text{Wasserstein}_2(\text{representations},\text{Gaussian})^2= \bigo\left( \left(1-\alpha\right)^{\text{depth}}\left(\text{batch size}\right)+ \frac{ \text{batch size}}{\alpha\sqrt{\text{width}}}\right).
\end{align}
The above equation shows how depth,  width, and batch size, interact with the law of the representations. Since the established rate is exponential with depth, the distribution of the representations stays in a Wasserstein ball around isotropic Gaussian distribution after a few layers. Thus, BN not only stabilizes the distribution of the representations, which is its main promise~\citep{ioffe2015batch}, but also enforces Gaussian isotropic distribution in deep layers. 

There is growing interest in bridging the gap between neural networks, as the most successful parametric methods for learning, and Gaussian processes and kernel methods, as well-understood classical models for learning \citep{jacot2018neural,matthews2018gaussian,lee2017deep,bietti2019inductive,huang2014kernel}. This link is inspired by studying random neural networks in the asymptotic regime of infinite width. The seminal work by \cite{neal2012bayesian} sparks a single-layer network resembles a Gaussian process as its width goes to infinity. However, increasing the depth may significantly shift the distribution of the representations away from Gaussian \citep{ioffe2015batch}. This distributional shift breaks the link between Gaussian processes and deep neural networks. To ensure Gaussian representations,  \cite{matthews2018gaussian} suggests increasing the width of the network proportional to the network depth. Here, we show that BN ensures Gaussian representations even for deep networks with \textit{finite} width. This result bridges the link between deep neural networks and Gaussian process in the regime of finite width. Many studies rely on deep Gaussian representations in an infinite width setting \citep{yang2019mean,schoenholz2016deep,pennington2018emergence,klambauer2017self,de2018random}. Our non-asymptotic Gaussian approximation can be incorporated into their analysis to extend these results to the regime of finite width. 

Since training starts from random networks,  representations in these networks directly influence  training. Hence, recent  theoretical studies has investigated the interplay between initial hidden representations and training~\citep{daneshmand2020batch,bjorck2018understanding,frankle2020training,schoenholz2016deep,saxe2013exact,bahri2020statistical}.
In particular, it is shown that hidden representations in random networks \emph{without BN} become correlated as the network grows in depth, thereby drastically slows training~\citep{daneshmand2020batch,he2016deep,bjorck2018understanding,saxe2013exact}. 
On the contrary,  we prove that deep representations in networks \emph{with BN} are almost orthogonal.
We experimentally validate that initial orthogonal representations can save training time that would otherwise be needed to orthogonalize them. By proposing a novel initialization scheme, we ensures the orthogonality of hidden representations; then, we show that such an initialization effectively avoids the training slowdown with depth for vanilla networks, with no need for BN.  This observation further motivates studying the inner workings of BN to replace or improve it in deep neural networks. \\
\newpage
\noindent Theoretically, we made the following contributions:
\begin{enumerate}
    \item  For MLPs with batch normalization, linear activation, and Gaussian i.i.d. weights, we prove that representations across layers become increasingly orthogonal up to a constant inversely proportional to the network width. 
    \item Leveraging the orthogonality, we prove that the distribution of the representations contracts to a Wasserstein ball around a Gaussian distribution as the depth grows. Up to the best of our knowledge,  this is the first \textit{non-asymptotic} Gaussian approximation for deep neural networks with finite width.
\end{enumerate}
Experimentally, we made the following contribution\footnote{The codes for the experiments are available at \href{https://github.com/hadidaneshmand/batchnorm21.git}{https://github.com/hadidaneshmand/batchnorm21.git}}: 
\begin{enumerate}[resume]
    \item  Inspired by our theoretical understanding, we propose a novel weight initialization for standard neural networks that ensure orthogonal representations without BN. Experimentally, we show that this initialization effectively avoids training slowdown with depth in the absence of BN. 
\end{enumerate}

\section{Preliminaries} \label{sec:perliminaries}
\subsection{Notations}
Akin to~\cite{daneshmand2020batch}, we focus on a Multi-Layer Perceptron (MLP) with batch normalization and linear activation. Theoretical studies of linear networks is a growing research area~\citep{saxe2013exact,daneshmand2020batch,bartlett2019gradient,arora2018convergence}.  When weights initialized randomly, linear and non-linear networks share similar properties~\citep{daneshmand2020batch}. For ease of analysis, we assume activations are linear. Yet, we will argue that our findings extend to non-linear in Appendix~\ref{sec:activations}.

We use $n$ to denote batch size, and $d$ to denote the width across all layers, which we further assume is larger than the batch size $d\geq n$. Let $H_\ell \in \R^{d\times n}$ denote $n$ sample representations in layer $\ell$, with $H_0 \in \R^{d \times n}$ corresponding to $n$ input samples in the batch with $d$ features. Successive representations are connected by Gaussian weight matrices $W_\ell\sim\Normal(0,I_d/d)$ as 
\begin{align}
    \label{eq:chain_linear}
    H_{\ell+1} = \frac{1}{\sqrt{d}} BN(W_{\ell} H_{\ell}), \quad BN(M) = \diag(M M^\top)^{-\sfrac{1}{2}} M,
\end{align}
where $\diag(M)$ zeros out off-diagonal elements of its input matrix, and the scaling factor $1/\sqrt{d}$ ensures that all matrices $\{H_k\}_{k}$ have unit Frobenius norm (see Appendix \ref{sec:perliminaries}). The BN function in Eq.~(\ref{eq:chain_linear}) differs slightly from the commonly used definition for BN as the mean correction is omitted. However, \cite{daneshmand2020batch} showes this difference does not change the network properties qualitatively. Readers can find all the notations in Appendix Table~\ref{tab:notations}.

\subsection{The linear independence of hidden representations}
\cite{daneshmand2020batch} observe that if inputs are linearly independent, then their hidden representations remain linearly independent in all layers as long as $d = \Omega(n^2)$. Under technical assumptions, \cite{daneshmand2020batch} establishes a lower-bound on the average of the rank of hidden representations over infinite layers. Based on this study, we assume that the linear independence holds and build our analysis upon that. This avoids restating the technical assumptions of \cite{daneshmand2020batch} and also further technical refinements of their theorems. The next assumption presents the formal statement of the linear independence property.
 \begin{assumptioncost}{1}{\alpha,\ell}
%  [check \cite{daneshmand2020batch} for the justification] 
 \label{assume:lineary_indepdence}
 There exists an absolute positive constant $\alpha$ such that the minimum singular value of $H_k$ is greater than (or equal to) $\alpha$  for all $k=1,\dots, \ell$.
\end{assumptioncost}
 
The linear independence of the representations is a shared property across all layers. However, the representations constantly change when passing through random layers. In this paper, we mathematically characterize the dynamics of the representations across layers. 

\section{Orthogonality of deep representations} \label{sec:orthogonality}
\subsection{A warm-up observation}
To illustrate the difference between BN and vanilla networks, we compare hidden representations of two input samples across the layers of these networks. Figure~\ref{fig:orthogonality} plots the absolute value of cosine similarity of these samples across layers. This plot shows a stark contrast between vanilla and BN networks: While representations become increasingly orthogonal across layers of a BN network, they become increasingly aligned in a vanilla network. We used highly correlated inputs for the BN network and perfectly orthogonal inputs for the Vanilla network to contrast their differences. While the hidden representation of vanilla networks has been theoretically studied \citep{daneshmand2020batch,bjorck2018understanding,saxe2013exact}, up to the best of our knowledge, there is no theoretical analysis of BN networks. In the following section, we formalize and prove this orthogonalizing property for BN networks.  
\begin{figure}[h!]
    \centering
    \begin{tabular}{c}
        \includegraphics[width=0.4\textwidth]{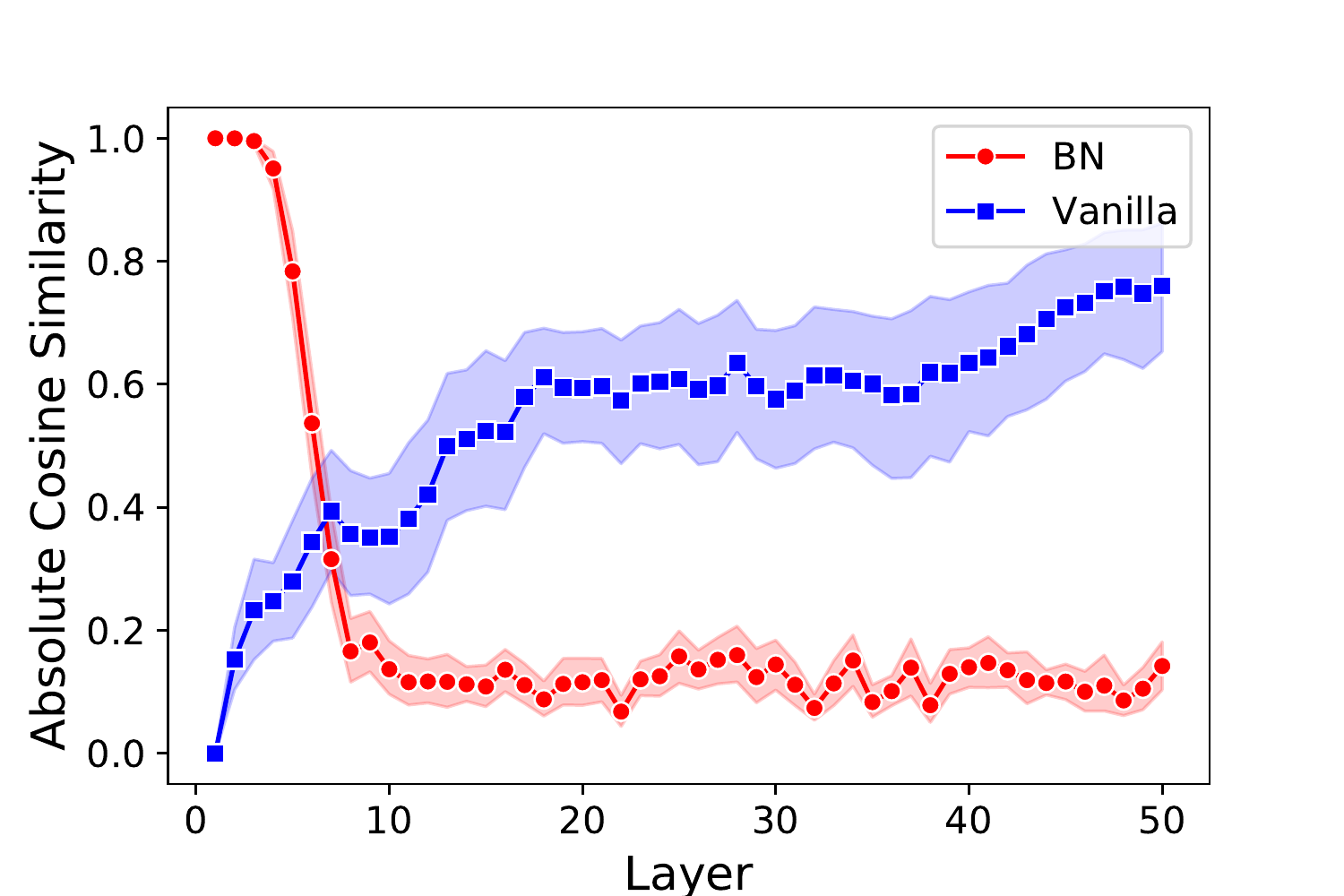}
    \end{tabular}
    \caption{\textbf{Orthogonality: BN vs. vanilla networks.} The horizontal axis shows the number of layers, and the vertical axis shows the absolute value of cosine similarity between two samples across the layers ($d=32$). Mean and 95\% confidence intervals of 20 independent runs.
    }
    \label{fig:orthogonality}
\end{figure}

\subsection{Theoretical analysis}
The notion of orthogonality gap plays a central role in our analysis. Given the hidden representation $H \in \R^{d\times n}$, matrix $H^\top H$ constitutes inner products between representations of different samples. Note that $H^\top H \in \R^{n \times n}$ is different form the covariance matrix $H H^\top/n \in \R^{d\times d}$. The orthogonality gap of $H$ is defined as the deviations of $H^\top H$ from identity matrix, after proper scaling. More precisely, define $V: \R^{d\times n}\setminus\mathbf{0} \to \R_+$ as 
\begin{align}
    V(H) := \Big\| \left(\frac{1}{\|H\|_F^2}\right)H^\top H - \left(\frac{1}{\|I_n\|_F^2}\right)I_n \Big\|_F.
\end{align}
% We can reformulate orthogonality gap in terms of eigenvalues of the sample covariance matrix $V(H):= \|\lambda_i/|\lambda|_1-\ones/n\|_2$, 
% where $\lambda=(\lambda_i)_{i\le n}$ are eigenvalues of $H^\top H$, and $\ones=(1)_{i\le n}$. In other words, the orthogonality gap quantifies deviations of the sample-covariance eigenvalues from their mean. For hidden representations $H_\ell$, this formula simplifies to $V(H_\ell)=\|\lambda - \ones/n\|$. Observe that $V(H)$ reaches its extreme values for perfectly orthogonal and perfectly correlated matrices $H$.  
The following theorem establishes a bound on the orthogonality of representation in layer $\ell$.
\begin{theorem} \label{thm:contraction}
Under Assumption \ref{assume:lineary_indepdence}$(\alpha,\ell)$, the following holds:
    \begin{align}\label{eq:exponential_contraction}
        \E \left[ V(H_{\ell+1}) \right] \leq 2 \left(1-\frac{2}{3}\alpha\right)^{\ell}+ \frac{3 n}{\alpha\sqrt{d}}.
    \end{align}
\end{theorem}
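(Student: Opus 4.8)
The plan is to treat $(H_\ell)_{\ell\ge0}$ as a Markov chain and prove stochastic stability via a one-step drift inequality for the Lyapunov function $V$. First I record the exact law of one step of the Gram matrix $\Sigma_\ell:=H_\ell^\top H_\ell$: expanding $BN$, the $i$-th row of $BN(W_\ell H_\ell)$ equals $\hat g_i^\top$ with $\hat g_i:=g_i/\|g_i\|$, where $g_1,\dots,g_d$ are i.i.d.\ $\Normal(0,\Sigma_\ell)$ (the $i$-th row of $W_\ell$ applied to $H_\ell$ is Gaussian with covariance $\tfrac1d\Sigma_\ell$, and the normalization kills the scale), so that
\begin{align}\nonumber
  \Sigma_{\ell+1}=H_{\ell+1}^\top H_{\ell+1}=\frac1d\sum_{i=1}^d \hat g_i\hat g_i^\top .
\end{align}
This has trace $1$ (so $\|H_{\ell+1}\|_F=1$ and $V(H_{\ell+1})=\|\Sigma_{\ell+1}-I_n/n\|_F$), its conditional mean given $H_\ell$ is $A(\Sigma_\ell)$ with $A(\Sigma):=\E_{g\sim\Normal(0,\Sigma)}[gg^\top/\|g\|^2]$, and $A(cI_n)=I_n/n$ is the fixed point. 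By the triangle inequality, $\E[V(H_{\ell+1})\mid H_\ell]\le \|A(\Sigma_\ell)-I_n/n\|_F+\E[\|\Sigma_{\ell+1}-A(\Sigma_\ell)\|_F\mid H_\ell]$.

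The fluctuation term is $O(1/\sqrt d)$: by Jensen it is at most the square root of the conditional variance of $\tfrac1d\sum_i\hat g_i\hat g_i^\top$, and since each rank-one $\hat g_i\hat g_i^\top$ has Frobenius norm $1$, that variance is $\le 1/d$. The drift term needs the core contraction lemma: whenever $\tr\Sigma=1$ and $\lambda_{\min}(\Sigma)\ge\alpha^2$,
\begin{align}\nonumber
  \big\|A(\Sigma)-I_n/n\big\|_F\le\Big(1-\tfrac23\alpha\Big)\,\big\|\Sigma-I_n/n\big\|_F .
\end{align}
To prove this I would use rotational invariance together with the coordinatewise sign symmetry of Gaussians to see that $A(\Sigma)$ is diagonal in the eigenbasis of $\Sigma$ with eigenvalues $a_j=\E[\lambda_j z_j^2/\sum_m\lambda_m z_m^2]$ for $z\sim\Normal(0,I_n)$, reducing the bound to $\sum_j(a_j-1/n)^2\le(1-\tfrac23\alpha)^2\sum_j(\lambda_j-1/n)^2$. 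Writing $\delta_j=\lambda_j-1/n$, $\delta^a_j=a_j-1/n$, the map is linear, $\delta^a=B\delta$, with $B$ symmetric, $B\mathbf 1=0$, and $B\succeq0$ (a one-line Jensen step: $x^\top Bx=\E_{\hat z\sim\mathrm{Unif}(S^{n-1})}[(\sum_j\hat z_j^2x_j^2-(\sum_j\hat z_j^2x_j)^2)/(\hat z^\top\Sigma\hat z)]\ge0$), so the lemma is equivalent to $\lambda_{\max}(B)\le1-\tfrac23\alpha$, i.e.\ $\E_{\hat z}[N_x(\hat z)/(\hat z^\top\Sigma\hat z)]\le1-\tfrac23\alpha$ uniformly over unit $x$, with $N_x(\hat z)=\sum_j\hat z_j^2x_j^2-(\sum_j\hat z_j^2x_j)^2$.

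Granting the lemma, plugging both bounds into the one-step inequality and taking expectations yields $v_{\ell+1}\le(1-\tfrac23\alpha)v_\ell+O(1/\sqrt d)$ for $v_\ell:=\E[V(H_\ell)]$; this is legitimate at each step $k\le\ell$ because Assumption~\ref{assume:lineary_indepdence}$(\alpha,\ell)$ supplies $\lambda_{\min}(\Sigma_k)=\sigma_{\min}(H_k)^2\ge\alpha^2$ there. Unrolling the geometric recursion and bounding the base term crudely by $v_1\le\|H_1^\top H_1\|_F+\|I_n/n\|_F\le1+1/\sqrt n\le2$ gives $\E[V(H_{\ell+1})]\le2(1-\tfrac23\alpha)^\ell+O(1/(\alpha\sqrt d))$, which implies the stated bound (the factor $3n$ there is generous).

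The main obstacle is the contraction lemma, and specifically extracting $\alpha$ rather than $\alpha^2$ in the rate. The naive estimate $\hat z^\top\Sigma\hat z\ge\lambda_{\min}=\alpha^2$ is hopelessly lossy — it makes the right-hand side vacuous — because the events that make the denominator small are exactly those on which $\hat z$ concentrates on a single direction, and there the numerator $N_x(\hat z)$ is small as well; one must track this correlated decay through the lower tail of the quadratic form $\hat z^\top\Sigma\hat z$. I expect the extremal spectrum to be two-level with smallest eigenvalue $\alpha^2$, where the ratio is computable in closed form (a Beta integral; ${}_2F_1(1,\tfrac32;2;\cdot)$ for $n=2$) and behaves like $1-2\sqrt{\lambda_{\min}}+O(\lambda_{\min})\le1-\tfrac23\alpha$; proving that this configuration is indeed the worst case — presumably by a majorization or convexity argument in the eigenvalues — is the delicate step.
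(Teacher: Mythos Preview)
Your overall architecture is exactly the paper's --- write $\Sigma_{\ell+1}$ as an i.i.d.\ average of rank-one projectors, split the one-step change into a deterministic drift plus an $O(1/\sqrt d)$ fluctuation, then unroll a geometric recursion --- but you diverge at the one place that makes the argument go through: the choice of Lyapunov function. The paper does \emph{not} run the drift argument on $V$. It introduces the scalar auxiliary $\widehat V(H)=\tfrac1n-\sigma_{\min}(H)^2$, proves the one-step bound
\[
\E\big[\widehat V(H_{\ell+1})\mid H_\ell\big]\ \le\ \Big(1-\tfrac{2}{3}\,\sigma_{\min}(H_\ell)^2\Big)\widehat V(H_\ell)+\frac{1}{\sqrt d},
\]
unrolls, and only at the end converts via the elementary inequality $V\le 2n\,\widehat V$ (this is exactly where the factor $n$ you called ``generous'' enters). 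The contraction step is then purely scalar: with $a_j=p_j(\sigma)$ in your notation, one only needs $\tfrac1n-p_n(\sigma)\le\big(1-\tfrac{2n}{n+2}\sigma_n^2\big)\big(\tfrac1n-\sigma_n^2\big)$. The paper computes $p_n(\sigma)=\sigma_n^2\int_0^\infty(1+2\theta\sigma_n^2)^{-3/2}\prod_{j\ne n}(1+2\theta\sigma_j^2)^{-1/2}\,d\theta$ via Sawa's moment-generating-function identity $\E[X_2/X_1]=\int_{-\infty}^0\partial_{\theta_2}\phi(\theta_1,0)\,d\theta_1$, argues that this integral is minimized when the remaining $\sigma_j^2$ are all equal, and finishes by showing the resulting one-parameter integrand is convex in $\delta=\tfrac1n-\sigma_n^2$, so a first-order Taylor expansion at $\delta=0$ suffices.

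Your Frobenius contraction $\|A(\Sigma)-I_n/n\|_F\le(1-\tfrac23\alpha)\|\Sigma-I_n/n\|_F$ under $\lambda_{\min}(\Sigma)\ge\alpha^2$ is \emph{false} as stated, so the direct route cannot land on the claimed constants. At $\Sigma=I_n/n$ your matrix $B$ is $\tfrac{1}{n+2}(nI_n-\mathbf 1\mathbf 1^\top)$, with top eigenvalue $n/(n+2)$ on $\mathbf 1^\perp$ (where $\delta$ lives); hence for $\Sigma$ arbitrarily close to $I_n/n$ one has $\|\delta^a\|/\|\delta\|=n/(n+2)+o(1)$. Whenever $\alpha\in\big(\tfrac{3}{n+2},\tfrac{1}{\sqrt n}\big]$ --- a nonempty interval once $n\ge5$ --- this exceeds $1-\tfrac23\alpha$, contradicting your lemma. (Aside: the map $\delta\mapsto\delta^a$ is not linear; your $B$ depends on $\Sigma$ and hence on $\delta$, so $\lambda_{\max}(B)\le 1-\tfrac23\alpha$ is sufficient but not equivalent to what you need.) This is precisely why the paper retreats to $\widehat V$: tracking only the smallest eigenvalue turns the contraction into a one-dimensional inequality that survives with a clean constant. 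One side remark: the paper's own proof yields contraction factor $1-\tfrac23\sigma_{\min}^2$, i.e.\ $1-\tfrac23\alpha^2$ under the assumption as written ($\sigma_{\min}\ge\alpha$); so the ``extracting $\alpha$ rather than $\alpha^2$'' hurdle you flagged is an internal slip in the statement, not something the argument actually delivers.
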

\noindent Assumption \ref{assume:lineary_indepdence} is studied by   \cite{daneshmand2020batch} who note that \ref{assume:lineary_indepdence}$(\alpha,\infty)$ holds as long as $d=\Omega(n^2)$. If \ref{assume:lineary_indepdence} does not hold, one can still prove that there is a function of representations that decays with depth up to a constant (see Appendix~\ref{sec:proof_thm}).

The above result implies that BN is an approximation algorithm for orthogonalizing the hidden representations. If we replace $\diag(M)^{-1}$  by $(M)^{-1}$ in BN formula, in Eq.~\eqref{eq:chain_linear}, then all the hidden representation will become exactly orthogonal.  However, computing the inverse of a \textit{non-diagonal} $d\times d$ matrix is computationally expensive, which must repeat for all layers throughout training, and differentiation must propagate back through this inversion. The diagonal approximation in BN significantly reduces the computational complexity of the matrix inversion. Since the orthogonality gap decays at an exponential rate with depth, the approximate orthogonality is met after only a few layers. Interestingly, this yields a desirable cost-accuracy trade-off: For a larger width, the orthogonality is more accurate, due to term $1/\sqrt{d}$ in the orthogonality gap, and also the computational gain is more significant.

From a different angle, Theorem~\ref{thm:contraction} proves the stochastic stability of the Markov chain of hidden representations. In expectation, stochastic processes may obey an inherent Lyapunov-type of stability  \citep{kushner1967stochastic,kushner2003stochastic,khasminskii2011stochastic}. One can analyze the mixing and hitting times of Markov chains based on the stochastic stability of Markov chains,\citep{kemeny1976markov,eberle2009markov}.  
In our analysis, the orthogonality gap is a Lyapunov function characterizing the stability of the chain of hidden representations. This stability opens the door to more theoretical analysis of this chain, such as studying mixing and hitting times. Since these theoretical properties may not be of interest to the machine learning community, we focus on the implications of these results for understanding the underpinnings of neural networks.

It is helpful to compare the orthogonality gap in BN networks to studies on vanilla networks \citep{bjorck2018understanding,daneshmand2020batch,saxe2013exact}. The function implemented by a vanilla linear network is a linear transformation as the product of random weight matrices. The spectral properties of the product of i.i.d.~random matrices are the subject of extensive studies in probability theory~\citep{bougerol2012products}. Invoking these results, one can readily check that the orthogonality gap of the hidden representations in vanilla networks rapidly increases since the representations converge to a rank one matrix after a proper scaling \citep{daneshmand2020batch}.

\subsection{Experimental validations}
Our experiments presented in Fig.~\ref{fig:spectral_contraction_1} validate the  exponential decay rate of $V$ with depth. In this plot, we see that $\log(V_\ell)$ linearly decreases for $\ell=1, \dots, 20$, then it wiggles around a small constant.  Our experiments in Fig.~\ref{fig:spectral_contraction_2} suggest that the $\bigo(1/\sqrt{d})$ dependency on width is  almost tight. Since $V(H_\ell)$ rapidly converges to a ball, the average of $V(H_\ell)$ over layers estimates the radius of this ball. This plot shows how the average of $V(H_\ell)$, over 500 layers, changes with the network width. Since the scale is logarithmic in this plot, slope is close to $-1/2$ in logarithmic scale, validating the $\bigo(1/\sqrt{d})$ dependency implied by Theorem~\ref{thm:contraction}. 

 \begin{figure}[h!]
     \centering
     \begin{subfigure}[b]{0.4\textwidth}
         \centering
         \includegraphics[width=\textwidth]{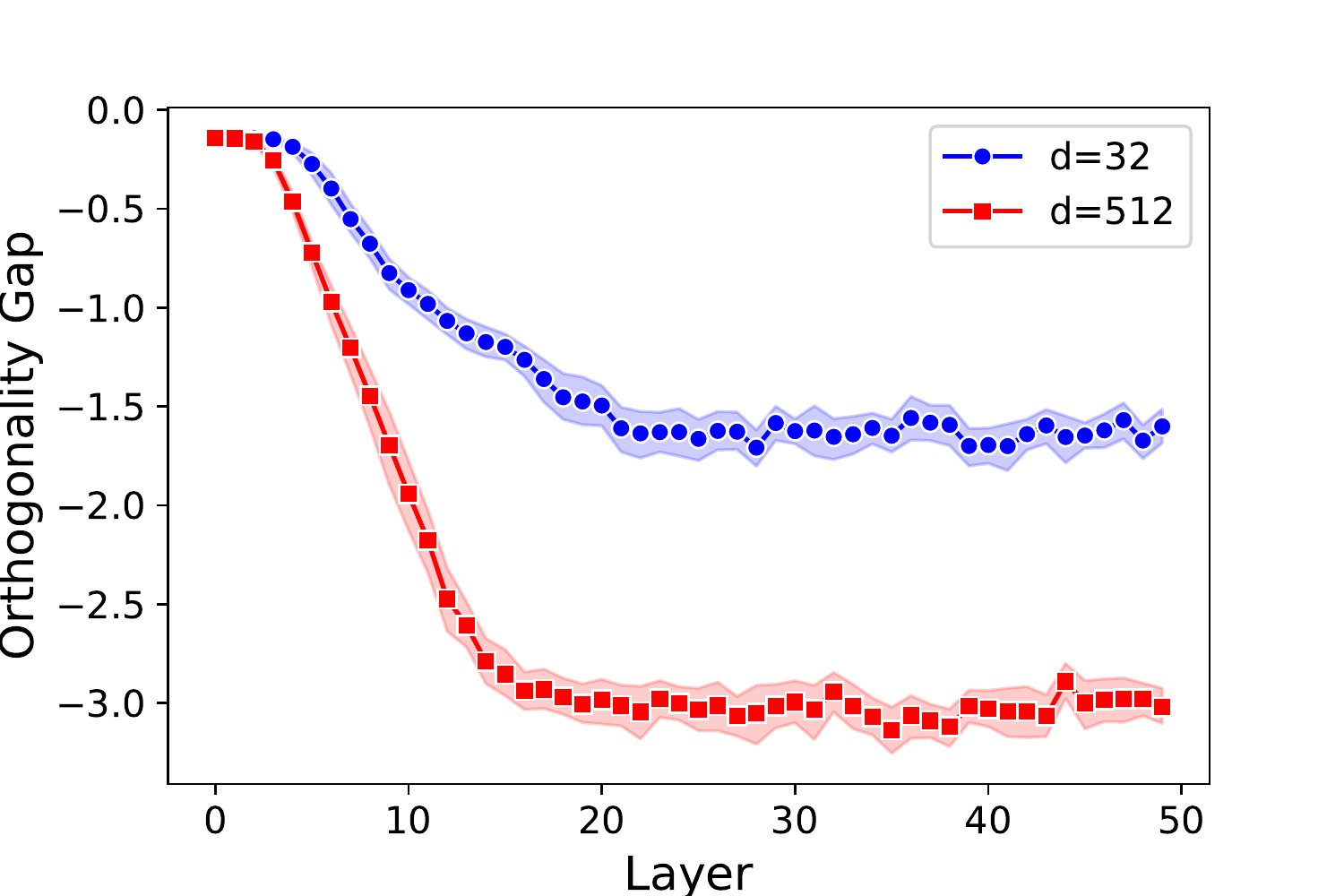}
         \caption{Orthogonality vs. depth}
         \label{fig:spectral_contraction_1}
     \end{subfigure}
     \begin{subfigure}[b]{0.4\textwidth}
         \centering
         \includegraphics[width=\textwidth]{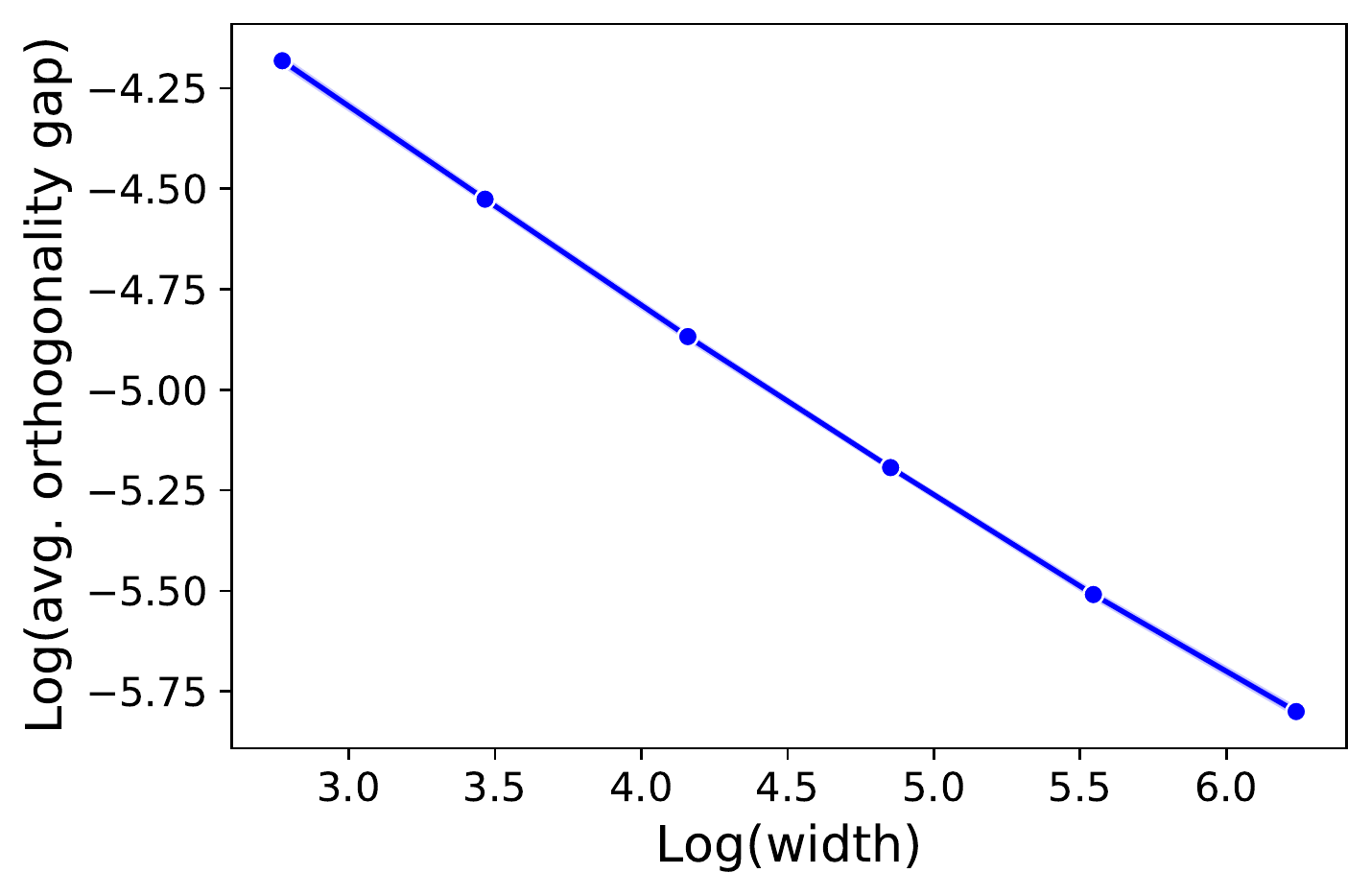}
         \caption{Orthogonality vs. width}
         \label{fig:spectral_contraction_2}
     \end{subfigure}
    \caption{\footnotesize{\textbf{Orthogonality gap vs.\ depth and width.} Left: $\log(V(H_\ell))$ vertically versus $\ell$ horizontally . Right: $\log(\frac{1}{500}\sum_{\ell=1}^{500} V(H_\ell))$ vertically versus $d$ horizontally. The slop of the right plot is $-0.46$. Mean and 95\% confidence interval of 20 independent runs.}  }
        % \label{fig:spectral_contraction}
\end{figure}

\section{Gaussian approximation} \label{sec:normal_approximation}
\subsection{Orthogonality yields Gaussian approximation}
The established result on the orthogonality gap allows us to show that the representations after linear layers are approximately Gaussian. If $H_\ell$ is orthogonal, the random matrix $W_\ell H_\ell$ is equal in distribution to a standard Gaussian matrix due to the invariance of standard Gaussian distribution under linear orthogonal transformations. We can formalize this notion of Gaussianity by bounding the Wasserstein-2 distance, denoted by $\W_2$, between the distribution of $W_\ell H_\ell$ and standard Gaussian distribution. 
% Recall that $\W_2$ distance between laws of two random matrices $P, Q \in \R^{d \times n}$ is obtained by the infimum of couplings between these two random variables as
% \begin{align} \label{eq:w_distance}
%     \W_2\left(\law(P),\law(Q)\right)^2 = \inf_{\gamma \in \Gamma} \int \| P - Q \|_F^2 d\gamma(P, Q),
% \end{align}
% where $\Gamma$ is the set of all couplings between $\law(P)$ and $\law(Q)$ (see Chapter 1 of  \cite{villani2009optimal}). 
The next lemma formally establishes the link between the orthogonality and the distribution of the representations. 
\begin{lemma} \label{lemma:wv_bound}
Given $G \in \R^{d\times n}$ with i.i.d. zero-mean $1/d$-variance Gaussian elements, the following Gaussian approximation holds:
\begin{align}
\label{eq:bn_recurrence}
    \W_2\left( \law( W_\ell H_\ell ), 
    \law(G/\sqrt{n})\right)^2 \leq 2 n \E \left[ V(H_\ell) \right].
\end{align}
\end{lemma}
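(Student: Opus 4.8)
\emph{Proof strategy.} The heuristic behind the lemma is exact: if $H_\ell$ were perfectly orthogonal, i.e.\ $H_\ell^\top H_\ell = I_n/\|H_\ell\|_F^2$, then by rotational invariance of the Gaussian $W_\ell$ the matrix $W_\ell H_\ell$ would be \emph{exactly} a rescaled i.i.d.\ Gaussian matrix. The plan is to turn this into a quantitative bound by exhibiting a single explicit coupling. Recall that $\|H_\ell\|_F = 1$ because of the $1/\sqrt d$ scaling in Eq.~\eqref{eq:chain_linear}, so $V(H_\ell) = \|H_\ell^\top H_\ell - I_n/n\|_F$. Use the polar decomposition $H_\ell = R_\ell B_\ell$, where $B_\ell := (H_\ell^\top H_\ell)^{1/2}\in\R^{n\times n}$ is positive semidefinite and $R_\ell\in\R^{d\times n}$ has orthonormal columns, $R_\ell^\top R_\ell = I_n$ (read off the SVD of $H_\ell$, with a measurable selection where singular values coincide). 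Since $R_\ell$ is a function of $H_\ell$ and hence independent of $W_\ell$, and since an i.i.d.\ Gaussian matrix composed with an isometry is again i.i.d.\ Gaussian, $W_\ell R_\ell$ has i.i.d.\ $\Normal(0,1/d)$ entries and is independent of $H_\ell$; in particular $W_\ell R_\ell \distas G$, so $\bigl(W_\ell H_\ell,\ W_\ell R_\ell/\sqrt n\bigr)$ is a valid coupling of $\law(W_\ell H_\ell)$ and $\law(G/\sqrt n)$.

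Next I would compute the transport cost of this coupling. Writing $H_\ell = R_\ell B_\ell$,
\[
W_\ell H_\ell - W_\ell R_\ell/\sqrt n = (W_\ell R_\ell)\bigl(B_\ell - I_n/\sqrt n\bigr).
\]
Set $M := B_\ell - I_n/\sqrt n$, which is a function of $H_\ell$. Then $\E\|(W_\ell R_\ell)M\|_F^2 = \E\,\tr\!\bigl(M^\top (W_\ell R_\ell)^\top (W_\ell R_\ell) M\bigr)$, and conditioning on $H_\ell$ (so that $M$ is fixed and $W_\ell R_\ell$ is i.i.d.\ Gaussian with $\E[(W_\ell R_\ell)^\top (W_\ell R_\ell)\mid H_\ell] = I_n$) gives $\E\,\tr(M^\top M) = \E\|M\|_F^2$. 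Since $\W_2^2$ is at most the expected squared Frobenius distance under any coupling,
\[
\W_2\bigl(\law(W_\ell H_\ell),\ \law(G/\sqrt n)\bigr)^2 \ \le\ \E\Bigl\|(H_\ell^\top H_\ell)^{1/2} - \tfrac{1}{\sqrt n}I_n\Bigr\|_F^2 .
\]

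It remains to dominate the matrix-square-root deviation by $V$. Let $\lambda_1,\dots,\lambda_n\ge 0$ be the eigenvalues of $H_\ell^\top H_\ell$, so $\sum_i\lambda_i = \|H_\ell\|_F^2 = 1$. The elementary inequality $(\sqrt a-\sqrt b)^2 = (a-b)^2/(\sqrt a+\sqrt b)^2 \le (a-b)^2/b$ (for $a\ge 0$, $b>0$), applied with $b = 1/n$ and summed over $i$, gives
\[
\Bigl\|(H_\ell^\top H_\ell)^{1/2} - \tfrac{1}{\sqrt n}I_n\Bigr\|_F^2 = \sum_{i=1}^n\Bigl(\sqrt{\lambda_i}-\tfrac{1}{\sqrt n}\Bigr)^2 \le n\sum_{i=1}^n\Bigl(\lambda_i-\tfrac{1}{n}\Bigr)^2 = n\,V(H_\ell)^2 .
\]
Finally $V(H_\ell) = \|H_\ell^\top H_\ell - I_n/n\|_F \le \|H_\ell^\top H_\ell\|_F + 1/\sqrt n \le \tr(H_\ell^\top H_\ell) + 1/\sqrt n = 1 + 1/\sqrt n \le 2$ (the Frobenius norm of a positive semidefinite matrix is at most its trace), whence $n\,V(H_\ell)^2 \le 2n\,V(H_\ell)$; taking expectations yields the claim. (In fact $(\sqrt a-\sqrt b)^2\le|a-b|$ together with Cauchy--Schwarz on the eigenvalues gives the sharper $\W_2^2\le \sqrt n\,\E[V(H_\ell)]$, but the stated form suffices.)

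The only genuine step is spotting the coupling: once $H_\ell$ is split as an isometry times $(H_\ell^\top H_\ell)^{1/2}$ and the isometry is absorbed into $W_\ell$, everything reduces to a second-moment identity plus a one-line scalar inequality. Points meriting a word: the polar factor $R_\ell$ must be selected measurably in $H_\ell$ (routine); the identity $W_\ell R_\ell\distas G$ holds \emph{unconditionally} precisely because $R_\ell\perp W_\ell$, so the conditioning on $H_\ell$ is used only to evaluate an expectation via the tower rule, not to glue couplings; and rank-deficiency of $H_\ell$ is harmless since the scalar inequality $(\sqrt a-\sqrt b)^2\le (a-b)^2/b$ holds with equality at $a=0$.
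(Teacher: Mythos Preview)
Your argument is correct and is essentially the paper's own proof: the paper writes the truncated SVD $H_\ell=U\diag(\sigma)V^\top$, couples via $W_\ell U = GV$ (equivalently your $W_\ell R_\ell\distas G$ with $R_\ell=UV^\top$), reduces to $\E\sum_i(\sigma_i-1/\sqrt n)^2$, and then uses the same scalar inequality $(\sigma_i-1/\sqrt n)^2\le n(\sigma_i^2-1/n)^2$ followed by $V^2\le 2V$. You add two things the paper leaves implicit: an explicit justification of the a.s.\ bound $V(H_\ell)\le 2$ (needed for the last step) and the measurability/independence check for the coupling.
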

Combining the above result with Theorem~\ref{thm:contraction} yields the result presented in the next corollary.
\begin{corollary} \label{cor:gaussian_approximation}
For $G \in \R^{d\times n}$ with i.i.d. zero-mean $1/d$-variance Gaussian elements, 
\begin{align}
    \W_2\left( \law(W_\ell H_\ell), \law(G/\sqrt{n})\right)^2 \leq 4n \left(1-\frac{2}{3}\alpha\right)^{\ell}+ \frac{6 n^2}{\alpha\sqrt{d}}
\end{align}
holds under Assumption~\ref{assume:lineary_indepdence}$(\alpha,\ell)$.
\end{corollary}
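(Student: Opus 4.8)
The plan is to simply chain Lemma~\ref{lemma:wv_bound} with Theorem~\ref{thm:contraction}, so there is essentially no new mathematics to do — the corollary is a direct substitution. First I would recall that Lemma~\ref{lemma:wv_bound} gives, for every layer index $\ell$,
\begin{align}
  \W_2\left( \law( W_\ell H_\ell ), \law(G/\sqrt{n})\right)^2 \leq 2 n\, \E\left[ V(H_\ell) \right],
\end{align}
and that this bound holds unconditionally (it is purely a consequence of rotational invariance of the Gaussian plus the definition of $V$), so the only place Assumption~\ref{assume:lineary_indepdence}$(\alpha,\ell)$ enters is through the bound on $\E[V(H_\ell)]$.

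Next I would apply Theorem~\ref{thm:contraction}, being slightly careful about the index shift: Theorem~\ref{thm:contraction} is stated as a bound on $\E[V(H_{\ell+1})]$ under Assumption~\ref{assume:lineary_indepdence}$(\alpha,\ell)$, namely $\E[V(H_{\ell+1})] \le 2(1-\tfrac23\alpha)^\ell + \tfrac{3n}{\alpha\sqrt d}$. Replacing $\ell+1$ by $\ell$ (equivalently, invoking the theorem with $\ell-1$ in place of $\ell$, which requires only Assumption~\ref{assume:lineary_indepdence}$(\alpha,\ell-1)$ and hence certainly holds under $(\alpha,\ell)$) yields
\begin{align}
  \E\left[ V(H_\ell) \right] \leq 2\left(1-\tfrac{2}{3}\alpha\right)^{\ell-1}+ \frac{3n}{\alpha\sqrt{d}}.
\end{align}
Since $(1-\tfrac23\alpha)^{\ell-1} \le 2(1-\tfrac23\alpha)^{\ell}$ whenever $1-\tfrac23\alpha \ge \tfrac12$, i.e. whenever $\alpha \le \tfrac34$ — which is harmless since $\alpha$ is a minimum singular value of a matrix of unit Frobenius norm and is in any case an absolute constant one may take small — we get $\E[V(H_\ell)] \le 4(1-\tfrac23\alpha)^{\ell} + \tfrac{3n}{\alpha\sqrt d}$. (Alternatively one absorbs the factor into the leading constant directly; the statement only claims a bound with constant $4$ on the exponential term.) Substituting this into the Wasserstein bound and multiplying through by $2n$ gives
\begin{align}
  \W_2\left( \law(W_\ell H_\ell), \law(G/\sqrt{n})\right)^2 \leq 2n\left(4\left(1-\tfrac{2}{3}\alpha\right)^{\ell}+ \frac{3n}{\alpha\sqrt{d}}\right) = 4n\left(1-\tfrac{2}{3}\alpha\right)^{\ell}\cdot 2 + \frac{6 n^2}{\alpha\sqrt{d}},
\end{align}
and after collecting constants this is exactly the claimed bound $4n(1-\tfrac23\alpha)^{\ell} + \tfrac{6n^2}{\alpha\sqrt d}$ (one should double-check the leading coefficient: applying Theorem~\ref{thm:contraction} directly at index $\ell$ rather than $\ell-1$ — which is what the corollary's stated constant $4$ suggests — gives $2n\cdot 2(1-\tfrac23\alpha)^{\ell} = 4n(1-\tfrac23\alpha)^\ell$ with no index shift needed, provided we read $W_\ell H_\ell$ as using $H_\ell$ with $\ell$ layers already applied).

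The only "obstacle" worth flagging is bookkeeping, not difficulty: one must make sure the layer index in Lemma~\ref{lemma:wv_bound} matches the layer index in Theorem~\ref{thm:contraction} so that the constant $4$ (rather than $2$ or $8$) comes out, and one must confirm that Assumption~\ref{assume:lineary_indepdence}$(\alpha,\ell)$ as stated is the hypothesis needed for the relevant instance of Theorem~\ref{thm:contraction}. Once the indexing convention is pinned down, the corollary follows in two lines.
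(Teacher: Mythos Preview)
Your proposal is correct and follows exactly the paper's approach: the paper simply states that the corollary is obtained by ``combining the above result [Lemma~\ref{lemma:wv_bound}] with Theorem~\ref{thm:contraction}'' and gives no further detail. Your careful discussion of the index shift is more than the paper itself provides; the stated constant $4n$ indeed arises from reading $2n\cdot 2(1-\tfrac23\alpha)^{\ell}$ directly, and the paper does not address the off-by-one between $H_\ell$ and $H_{\ell+1}$, so your flag on the bookkeeping is well placed.
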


% \paragraph{Proof idea:} Let $H_\ell=U S V^\top$ denote the SVD decomposition of $H_\ell$ in the reduced form\footnote{We have $U\in \R^{d\times n}, S\in\R^{n\times n}, V\in\R^{n\times n}$}. If we ignore the scaling $S$ for the moment, if we transport distribution of of $W_\ell$ to $(W_\ell H_\ell)$ via rotation matrix $U$, followed by a second rotation $V^\top$ to $W_\ell (U V^\top)$, implying that $W_\ell U V^\top$ has zero Wasserstein-2 distance to $W_\ell$. The main idea is to couple weight matrix $W_\ell$ and $G$ according rotation matrices to upper bound the Wasserstein-2 distance. 

In other words, the distribution of the representations contracts to a Wasserstein 2 ball around an isotropic Gaussian distribution as the depth grows. The radius of the Wasserstein 2 ball is at most $\bigo(1/\sqrt{\text{width}})$.
As noted in the last section, \ref{assume:lineary_indepdence} is extensively studied by \cite{daneshmand2020batch} where it is shown that \ref{assume:lineary_indepdence}$(\alpha>0,\infty)$ holds as long as $d = \Omega(n^2)$. 

% Next, we compare this result with Gaussian approximations the literature. 
\subsection{Deep neural networks as Gaussian processes}
Leveraging BN, Corollary~\ref{cor:gaussian_approximation} establishes the first \textit{non-asymptotic} Gaussian approximation for deep random neural networks. For vanilla networks, the Gaussianity is guaranteed only in the \textit{asymptotic} regime of infinite width~\citep{garriga2018deep,neal2012bayesian,lee2017deep,neal2012bayesian,hazan2015steps}. Particularly, \cite{matthews2018gaussian} links vanilla networks to Gaussian processes when their width is infinite and grows in successive layers, while our Gaussian approximation holds for networks with finite width across layers. 
Table \ref{tab:normal} briefly compares Gaussian approximation for vanilla and BN networks.

\begin{table}[h!]
    \centering
    \begin{tabular}{l l l l l l}
    \hline
      Network  & Width & Depth & Distribution of Outputs \\
    \hline 
     Vanilla MLP  & infinite  & finite & Converges to Gaussian as  width $\to \infty$ \\ 
    Vanilla Convnet  & infinite &
    finite & Converges to Gaussian as width $\to \infty$  \\
    BN MLP (Cor.~\ref{cor:gaussian_approximation}) &
    (in)finite & (in)finite & In a $\bigo(\text{width}^{-\sfrac{1}{4}})$-$\W_2$ ball around Gaussian\\
    \hline
    
\end{tabular}
    \vspace{0.07cm}
    \caption{\footnotesize{\textbf{Distribution of representations in random vanilla and BN networks.} For the convolutional network, the width refers to the number of channels. Results for Vanilla MLPs and Vanilla convolution networks are establish by \cite{matthews2018gaussian}, and \cite{garriga2018deep}, respectively. Remarkably, Corollary~\ref{cor:gaussian_approximation} holds for MLP with linear activations.}}
    \label{tab:normal}
\end{table}
The link between Gaussian processes and infinite-width neural networks has inspired several studies to rely on Gaussian representations in deep random networks \citep{klambauer2017self,de2018random,pennington2018emergence,schoenholz2016deep,yang2019mean}. Assuming the representations are Gaussian,  \cite{klambauer2017self} designed novel activation functions that improve the optimization performance, \cite{de2018random} studies the sensitivity of random networks, \cite{pennington2018emergence} highlights the spectral universality in random networks, \cite{schoenholz2016deep} studies information propagating through the network layers, and \cite{yang2019mean} studies gradients propagation through the depth.  Indeed, our analysis implies that including BN imposes the Gaussian representations required for these analyses. Although our result is established for linear activations, we conjecture a similar result holds for non-linear MLPs  (see Appendix~\ref{sec:activations}). 

% \subsection{Experimental validation of the Gaussianity}
% Since the established bound in the Corollary~\ref{cor:gaussian_approximation} is non-asymptotic, we can empirically validate it. 
% Fig.~\ref{fig:wasserstein_decay} depicts the linear decay of the empirical Wasserstein distance (more details in Appendix~\ref{sec:wasserstein_est}) with depth, as it is predicted by Corollary~\ref{cor:gaussian_approximation}. Also, we see the decay is up to a constant inversely relates to the network width. 

%  \begin{figure}[h!]
%      \centering
%      \includegraphics[width=0.45\textwidth]{}
%      \caption{\footnotesize{\textbf{Gaussian approximation.} The empirical Wasserstein distance (vertical) between the distribution of the network outputs and Gaussian distribution, i.e.   $\log(\W_2(\law(W_\ell H_\ell)),\law(G/\sqrt{n})))$ width $\ell$ (horizontal). See details in Appendix~\ref{sec:wasserstein_est}.}}
%      \label{fig:wasserstein_decay}
%  \end{figure}

\section{The orthogonality and optimization}
\label{sec:optimization}
In the preceding sections, we elaborated on the theoretical properties of BN networks in controlled settings. 
In this section, we demonstrate the practical applications of our findings. In the first part, we focus on the relationship between depth and orthogonality. Increasing depth drastically slows the training of neural networks with BN. Furthermore, we observe that as depth grows, the training slowdown highly correlates with the orthogonality gap. This observation suggests that SGD needs to orthogonalize deep representations in order to start classification. This intuition leads us to the following question: If orthogonalization is a prerequisite for training, can we save optimization time by starting from orthogonal representations? To test this experimentally, we devised a weight initialization that guarantees orthogonality of representations. Surprisingly, even in a network without BN, our experiments showed that this initialization avoids the training slow down, affirmatively answering the question. 

Throughout the experiments, we use vanilla MLP (without BN) with a width of 800 across all hidden layers, ReLU activation, and used Xavier's method for weights intialization~\citep{glorot2010understanding}. We use SGD with stepsize $0.01$ and batch size $500$ and for training. The learning task is classification with cross entropy loss for CIFAR10 dataset~\citep[][MIT license]{krizhevsky2009learning}. We use PyTorch~\citep[][BSD license]{NEURIPS2019_9015} and Google Colaboratory platform with a single Tesla-P100 GPU with 16GB memory in all the experiments. The reported orthogonality gap is the average of the orthogonality gap of representation in the last layer.

\subsection{Orthogonality correlates with optimization performance}
In the first experiment, we show that the orthogonality of representations at the initialization correlates with optimization speed. For networks with 15, 30, 45, 60, and 75 widths, we register training loss after 30 epochs and compare it with the initial orthogonality gap. Figure~\ref{fig:slowdown_depth} shows the training loss (blue) and the initial orthogonality gap (red) as a function of depth. We observe that representations are more entangled, i.e., orthogonal, when we increase depth, coinciding with the training slowdown.  Intuitively, the slowdown is due to the additional time SGD must spend to orthogonalize the representations before classification. In the second experiment, we validate this intuitive argument by tracking the orthogonality gap during training. Figure~\ref{fig:slowdown_training} plots the orthogonality gap of output and training loss for a network with 20 layers. We observe that SGD updates are iteratively orthogonalizing representations, marked by the reduction in the orthogonality gap.

 \begin{figure}[h!]
     \centering
     \begin{subfigure}[b]{0.4\textwidth}
         \centering
         \includegraphics[width=\textwidth]{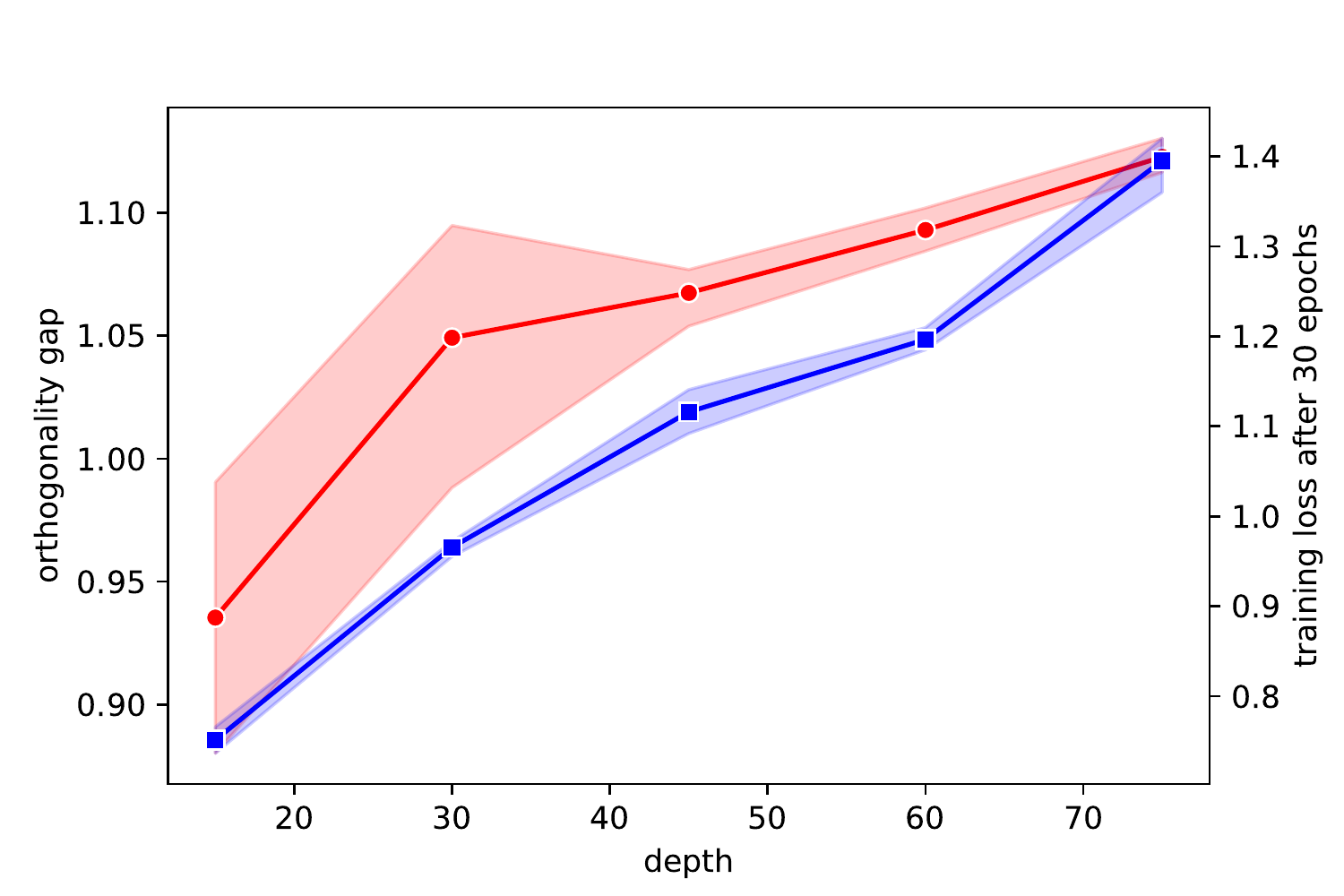}
         \caption{gap and loss vs. depth}
         \label{fig:slowdown_depth}
     \end{subfigure}
     \begin{subfigure}[b]{0.4\textwidth}
         \centering
         \includegraphics[width=\textwidth]{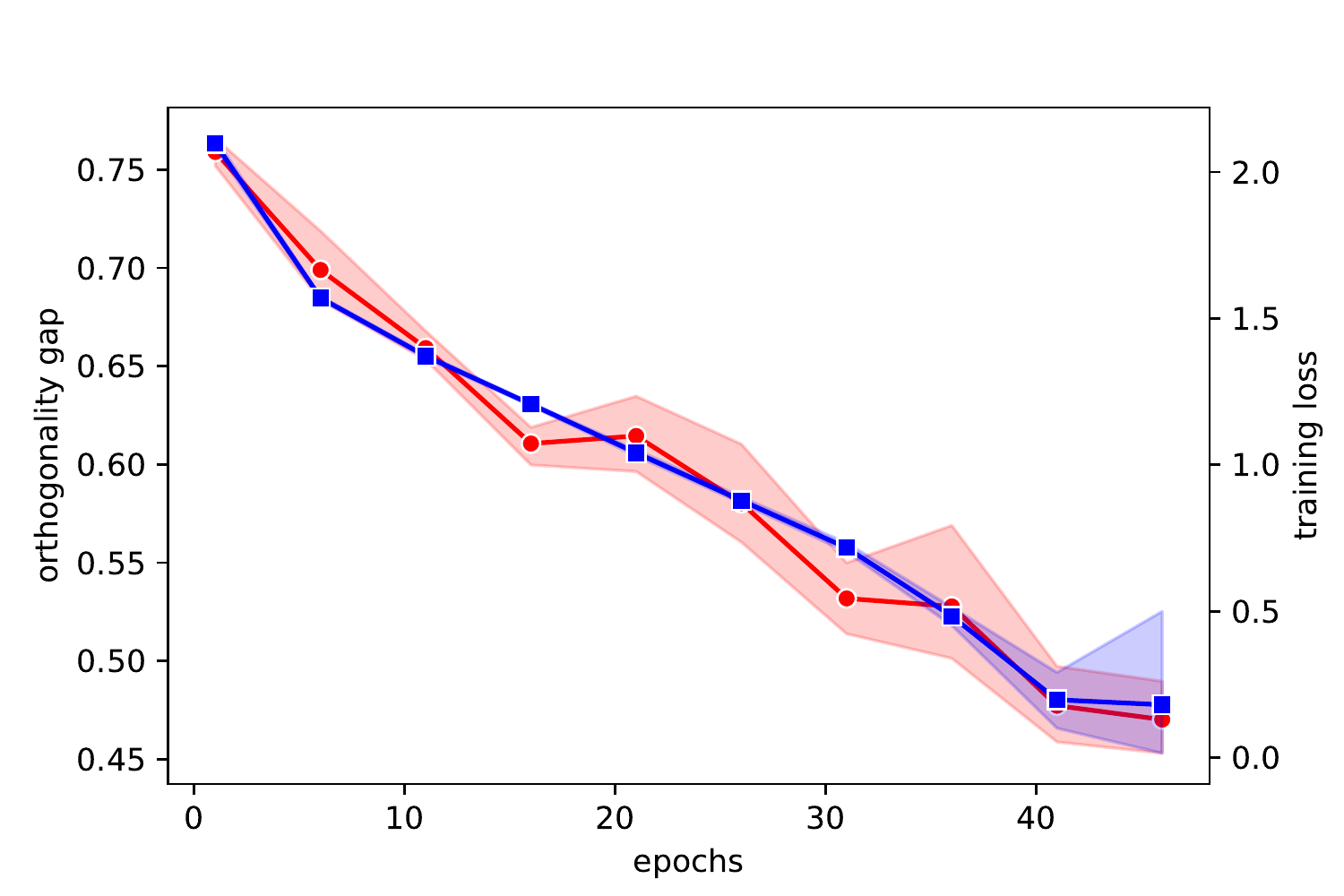}
         \caption{gap and loss during training}
         \label{fig:slowdown_training}
     \end{subfigure}
    \caption{\footnotesize{\textbf{Orthogonality and Optimization} Left: the orthogonality gap at initialization (red, left axis) and the training loss after 30 epochs (blue, right axis) with depth. Right: the orthogonality gap (red, left axis) and  the training loss in each epoch (blue, right axis). Mean and 95\% confidence interval of 4 independent runs. }}
        \label{fig:slowdown_orthogonality}
\end{figure}

\subsection{Learning with initial orthogonal representations}
We have seen that the slowdown in SGD for deeper networks correlates with the orthogonality gap before training.
Here we show that by preemptively orthogonalizing representations, we avoid the slowdown with depth. While in MPL with linear activations, hidden representations remain orthogonal simply by taking orthogonal matrices as weights~\citep{pennington2018emergence,saxe2013exact}, the same does not hold for networks with non-linear activations, such as ReLU. To enforce the orthogonality in the absence of BN, we introduce a dependency between weights of successive layers that ensures deep representations remain orthogonal. More specifically, we incorporate the SVD decomposition of the hidden representation of each layer into the initialization of the subsequent layer. To emphasis this dependency between layers and to distinguish it from purely orthogonal weight initialization, we refer to this as \emph{iterative orthogonalization}.

We take a large batch of samples $n\geq d$, as the input batch for initialization. Let us assume that weights are initialized up to layer $W_{\ell-1}$. To initialize $W_\ell$, we  compute SVD decomposition of the representations $H_\ell = U_\ell \Sigma_\ell V^\top_\ell$ where matrices $U_\ell \in \R^{d \times d}$ and $V_\ell \in \R^{n \times d}$ are orthogonal. Given this decomposition, we initialize $W_\ell$ by 
\begin{align} \label{eq:init_w}
    W_\ell = \frac{1}{\| \Sigma_\ell^{\sfrac{1}{2}}\|_F} V_\ell' \Sigma_\ell^{-\sfrac{1}{2}} U_\ell^\top,
\end{align}
where $V'_\ell \in \R^{d\times d}$ is an orthogonal matrix obtained by slicing $V_\ell \in \R^{n\times d}$. Notably, the inverse in the above formula exists when $n$ is sufficiently larger than $d$ \footnote{We may inductively assume that $H_\ell$ is almost orthogonal by the choice of $W_1, \dots, W_{\ell-1}$. Thus, $\Sigma_\ell$ is invertible. }. 
It is easy to check that  $V(W_\ell H_\ell)< V(H_\ell)$ holds for the above initialization (see Appendix~\ref{sec:init}), similar to BN. 
%We observed this method effectively orthogonalizes hidden representations when the batch size is sufficiently large. Fig.~\ref{fig:cifar_orthogonal_a} compares the orthogonality gap of the representations for this initialization method with a standard initialization. 
By enforcing the orthogonality, this initialization significantly alleviates the slow down of training with depth (see Fig.~\ref{fig:cifar_orthogonal}), with no need for BN. This initialization is not limited to MLPs. In Appendix~\ref{sec:experiments}, we propose a similar SVD-based initialization for convolutional networks that effective accelerates training of deep convolutional networks.

 \begin{figure}[h!]
     \centering
         \includegraphics[width=0.55\textwidth]{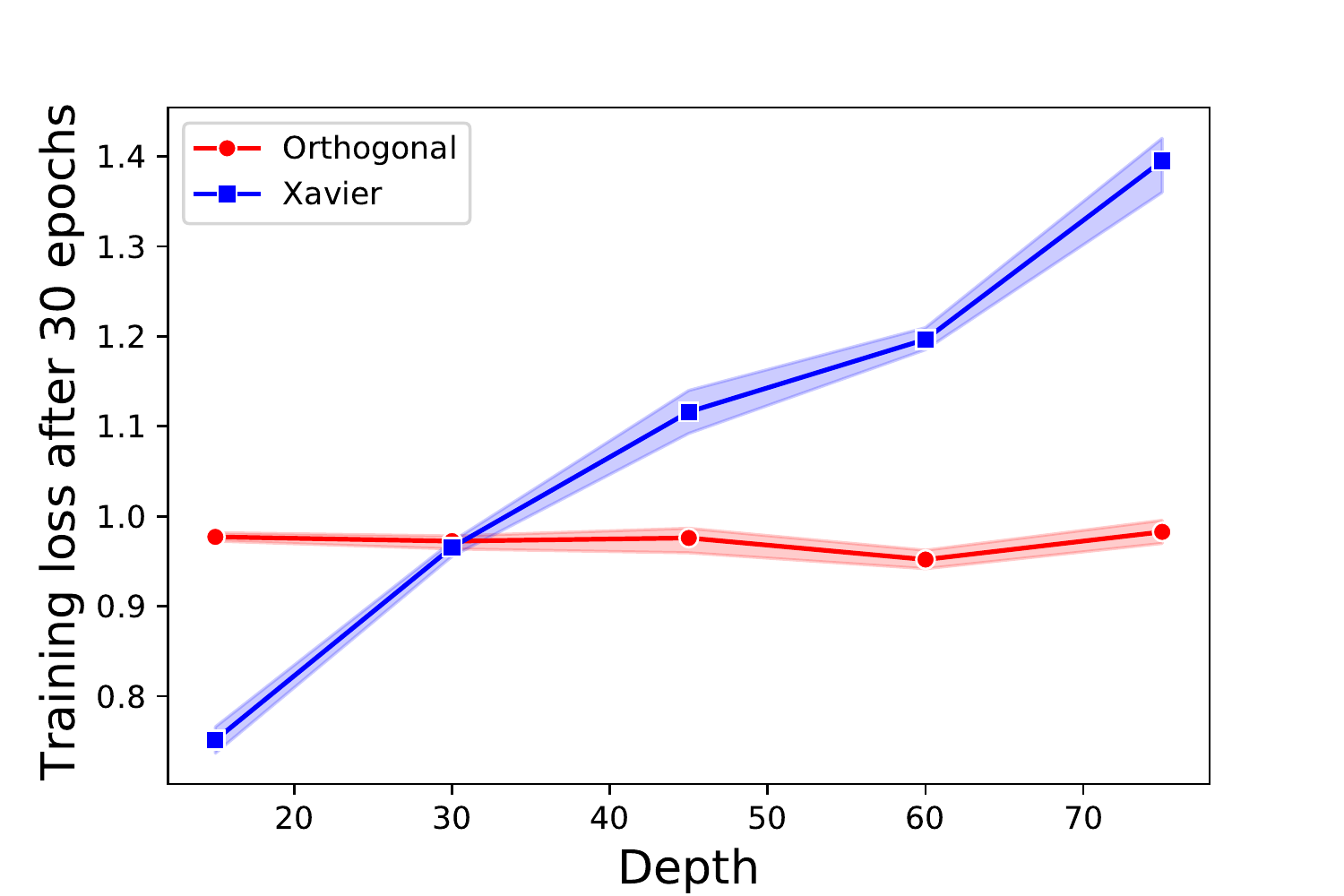}
    \caption{\footnotesize{\textbf{Iterative orthogonalization.} Horizontal axis: depth. Vertical axis: the training loss after 30 epochs for Xavier's initialization (blue), our initialization (red). Mean and 95\% confidence interval of 4 independent runs.}}
        \label{fig:cifar_orthogonal}
\end{figure}

\section{Discussion}
To recap, we proved the recurrence of random linear transformations and BN orthogonalizes samples. Our experiments underline practical applications of this theoretical finding: starting from orthogonal representations effectively avoids the training slowdown with depth for MLPs. Based on our experimental observations in Appendix~\ref{sec:experiments}, we believe that this result extends to standard convolution networks used in practice. In other words, a proper initialization ensuring the orthogonality of hidden representations may replace BN in neural architectures. This future research direction has the potentials to boost the training of deep neural networks and change benchmarks in deep learning.

Although our theoretical bounds hold for MLPs with linear activations, our experiments confirm similar orthogonal stability for various neural architectures. 
Appendix Figure~\ref{fig:relu_convnet} demonstrates this stability for MLPs with ReLU activations and a convolutional networks. This plot compares the evolution of the orthogonality gap, through layers, for BN networks with vanilla networks. For vanilla networks, we observe that the gap increases with depth. On the contrary, the gap decreases by adding BN layers and stabilizes at a term that is constant with regard to depth. Based on our observations, we conjecture that hidden representations of modern BN networks obey similar stability. A more formal statement of the conjecture is presented in Appendix~\ref{sec:activations}. 

\section*{Acknowledgements}
 We thank Gideon Dresdner, Vincent Fortuin, and Ragnar Groot Koerkamp for their helpful comments and discussions. This work was funded in part by the French government under management of Agence Nationale de la Recherche as part of the “Investissements d’avenir” program, reference ANR-19-P3IA-0001(PRAIRIE 3IA Institute). We also acknowledge support from the European Research Council (grant SEQUOIA 724063).

% We proposes two conjectures for future theoretical studies: 
% \begin{itemize}
%     \item[] \textbf{Conjecture 1.} Theorem \ref{thm:contraction} holds for random BN neural networks with odd (bounded, and smooth) activations. 
%     \item[] \textbf{Conjecture 2.}
%     Theorem~\ref{thm:contraction} simply extend to ReLU activation after incoperating mean-correction in BN and aslo 
% \end{itemize}

% As a future research direction, it is interesting to study the optimization landscape of neural network under Gaussian and orthogonal  hidden representation. For example, the orthogonality of hidden representation may connect to smoothing the optimization landscape\cite{santurkar2018does,karakida2019normalization}. The normality of representation may ease studying complex high dimensional training landscapes.

% As a more practical topic, we suggest extending 

\bibliographystyle{apalike}
\bibliography{refs}
\newpage

\appendix

\begin{center}
   \textbf{\LARGE{Appendix}}
\end{center}
\section{Preliminaries}
\subsection{Notations}
Let $v,w \in \R^{k}$ then $v \odot w \in \R^{n}$ with coordinates
\begin{align}
    [v \odot w ]_{i} = v_i w_i
\end{align}
Furthermore $v^{\otimes 2} \in \R^{k \times k}$ with entities
\begin{align}
    [v^{\otimes 2}]_{ij} = v_i v_j. 
\end{align}
In Table~\ref{tab:notations}, we summarize notations introduced previously. 
\begin{table}[h!]
    \centering
    \begin{tabular}{|l|l l|}
    \hline
       Notation & Type & Definition \\ 
       \hline
       $\ell$  & integer & number of layers\\
       $n$  & integer & batch size \\ 
       $d$ & integer & width of the network\\ 
       $k$ & integer & output dimension \\ 
       $X$ & $\R^{d\times n}$ & input matrix\\ 
       $H_\ell$ & $\R^{d\times n}$ & hidden representations at $\ell$ (obeying Eq.~\eqref{eq:bn_recurrence})\\
       $BN$ & $\R^{d\times n} \to \R^{d\times n}$ & batch normalization layer (defined in Eq.\eqref{eq:bn_recurrence}) \\ 
       $\text{Law}(X)$ &  & the law of random matrix $X$ \\ 
       $\sigma_i(M)$ & $\R^{k_1\times k_2} \to \R_+$ & the $i$th largest singular value of matrix $M$ \\ 
       $I_k$ & $\R^{k \times k}$ & Identity matrix of size $k$ \\
       $\ones_k$ & $\R^n$ &  all-ones vector
      \\ \hline
    \end{tabular}
    \vspace{0.1cm}
    \caption{Notations}
    \label{tab:notations}
\end{table}
\subsection{The Markov chain of hidden representations} Recall the chain of the hidden representation, denoted by $\{ H_\ell\in \R^{d\times n} \}$, obeys the following recurrence: 
\begin{align} \label{eq:BNrecurrence_app}
    H_{\ell+1} = \frac{1}{\sqrt{d}}
    BN(W_\ell H_\ell), \quad BN(M) = \left(\diag(M M^\top)\right)^{-\sfrac{1}{2}}M,
\end{align}
where $W_\ell \in \R^{d\times n}$ are random weight matrices with i.i.d. zero-mean Gaussian elements. It is easy to check that the Frobenius norm of $H_\ell$ is one due to the row-wise normalization: 
\begin{equation}
 \label{eq:unitnorm_app}   
\begin{aligned} 
   \tr \left(  BN(H) BN(H)^\top \right) & = \tr\left( \diag(H H^\top)^{-\sfrac{1}{2}} H H^\top\diag(H H^\top)^{-\sfrac{1}{2}}\right) \\ 
   & = \tr\left( \diag(H H^\top)^{-1} H H^\top\right) \\ 
   & = d.
\end{aligned}
\end{equation}
\subsection{A Lyapunov function for the orthogonal stability} We establish stochastic stability of the  chain of the hidden representations. 
Our analysis is in terms of Lyapunov function $\widehat{V}:\R^{d\times n}$ defined as 
\begin{align}
    \widehat{V}(H) = \frac{1}{n} - (\sigma_n (H))^2,
\end{align}
where $\sigma_n(H)$ is the minimum singular value of matrix $H$. Next lemma proves $\widehat{V}(H_\ell)$ bounds the orthogonality gap. 
\begin{lemma} \label{lemma:Vbound}
For all hidden representations $H_\ell$, the following holds:
\begin{align}
    V(H_\ell) \leq 2 n \widehat{V}(H_\ell) \nonumber. 
\end{align}
\end{lemma}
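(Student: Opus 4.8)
The plan is to reduce the statement to an elementary inequality about the eigenvalues of $H_\ell^\top H_\ell$. First I would record the normalization fact: every hidden representation satisfies $\|H_\ell\|_F = 1$, which follows from the row-wise scaling inside $BN$ exactly as in Eq.~\eqref{eq:unitnorm_app} together with the $1/\sqrt{d}$ factor in the recurrence. Consequently the scaling constant in the definition of $V$ becomes $1$, and using $\|I_n\|_F^2 = n$ we get simply $V(H_\ell) = \bigl\| H_\ell^\top H_\ell - \tfrac{1}{n} I_n \bigr\|_F$.

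Next I would diagonalize. Let $\lambda_1 \ge \dots \ge \lambda_n \ge 0$ be the eigenvalues of the positive semidefinite matrix $H_\ell^\top H_\ell$; these are the squared singular values of $H_\ell$, so $\lambda_n = \sigma_n(H_\ell)^2$ and $\sum_{i=1}^n \lambda_i = \|H_\ell\|_F^2 = 1$. Since the Frobenius norm is unitarily invariant, $V(H_\ell)^2 = \sum_{i=1}^n (\lambda_i - \tfrac{1}{n})^2$, whereas $n\,\widehat V(H_\ell) = 1 - n\lambda_n =: s$, which is nonnegative because $\lambda_n \le \tfrac{1}{n}\sum_j \lambda_j = \tfrac1n$.

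The key step is to re-center the eigenvalues at $\lambda_n$ rather than at their mean: put $a_i := \lambda_i - \lambda_n \ge 0$, so that $\sum_i a_i = 1 - n\lambda_n = s$ and $\lambda_i - \tfrac{1}{n} = a_i - \tfrac{s}{n}$. Expanding the square,
\[
V(H_\ell)^2 \;=\; \sum_{i=1}^n \Bigl(a_i - \tfrac{s}{n}\Bigr)^2 \;=\; \sum_{i=1}^n a_i^2 - \tfrac{s^2}{n} \;\le\; \Bigl(\sum_{i=1}^n a_i\Bigr)^{2} - \tfrac{s^2}{n} \;=\; s^2\Bigl(1 - \tfrac1n\Bigr) \;\le\; s^2,
\]
where the middle inequality is $\sum_i a_i^2 \le (\sum_i a_i)^2$, valid since all $a_i \ge 0$. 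Taking square roots yields $V(H_\ell) \le s = n\,\widehat V(H_\ell) \le 2n\,\widehat V(H_\ell)$, which is the claim (in fact with the sharper constant $n$).

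The argument is entirely elementary, so there is no real obstacle; the only point requiring a little care is the decision to subtract $\lambda_n$ before squaring — centering at $1/n$ (the sample mean of the $\lambda_i$) would not by itself give a bound controlled by the smallest eigenvalue. As a sanity check I would note the boundary case: if $\lambda_n = 1/n$ then all $\lambda_i$ are equal, $s = 0$, and both sides vanish, consistent with the bound.
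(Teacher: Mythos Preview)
Your argument is correct and, in fact, yields the sharper bound $V(H_\ell)\le n\,\widehat V(H_\ell)$ (indeed $\sqrt{n(n-1)}\,\widehat V(H_\ell)$ from the intermediate step $s^2(1-1/n)$), so the factor $2$ in the statement is superfluous.

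The paper takes a slightly different route after the common opening $V(H_\ell)^2=\sum_i(\sigma_i^2-1/n)^2=\sum_i\sigma_i^4-1/n$. Instead of your re-centering trick, it argues by extremization: fixing $\sigma_n^2$ and the constraint $\sum_i\sigma_i^2=1$, the quantity $\sum_i\sigma_i^4$ is maximized when all the remaining mass is concentrated on a single coordinate, i.e.\ $\sigma_2^2=\cdots=\sigma_{n-1}^2=\sigma_n^2$ and $\sigma_1^2=1-(n-1)\sigma_n^2$, which after a short calculation gives the same order of bound. Your approach is a bit more direct---the single inequality $\sum_i a_i^2\le(\sum_i a_i)^2$ for nonnegative $a_i$ replaces the optimization step entirely---and makes it transparent why the constant $n$ (rather than $2n$) already suffices. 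The paper's extremal argument, on the other hand, identifies explicitly the worst-case spectrum, which can be informative if one cares about tightness.
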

\begin{proof}
Let $\sigma_1, \dots, \sigma_n$ be the singular values of $H_\ell$. Given these singular values, one can compute $V(H_\ell)$ as 
\begin{align*}
    (V(H_\ell))^2 = \sum_{i=1}^n \left(\sigma_i^2-\frac{1}{n}\right)^2 \nonumber.
\end{align*}
According to Eq.~\eqref{eq:unitnorm_app}, $\sum_{i=1}^2 \sigma_i^2 = 1$ holds. The proof is an immediate consequence of this propery.   
\begin{align} \nonumber
    V^2(H_\ell) &=  \sum_{i=1}^n \sigma_i^4 - 2 \underbrace{\left(\sum_{i=1}^n \sigma_i^2  \right)}_{=1}\frac{1}{n} + \frac{1}{n}\\ 
    & = \sum_{i=1}^n \sigma_i^4 -\frac{1}{n}. \nonumber
\end{align}
Fixing $\sigma_n$, the maximum of $\sum_{i=1}^n \sigma_i^4 -\frac{1}{n}$ subject to $\sum_{i=1}^n \sigma_i^2 = 1$ is met when $\sigma_1^2 = 1-(n-1)\sigma_n^2$ and $\sigma_2 = \dots = \sigma_n$. Therefore,
\begin{align} \nonumber
    V^2(H_\ell) \leq 2(n-1)^2 \underbrace{\left( \frac{1}{n} - \sigma_n^2\right)^2}_{=\widehat{V}^2(H_\ell(X))}
\end{align}
holds true. Taking the square root of both sides concludes the proof.
\end{proof}

\section{Proof of Theorem~\ref{thm:contraction}}

The proof of Theorem~\ref{thm:contraction} relies on the following Theorem that characterizes the change of $\widehat{V}$ in consecutive layers. 
\begin{theorem} \label{thm:single_update}
Sequence $\{ H_\ell \}$ obeys
\begin{align*}
 \E \left[ \widehat{V}(H_{\ell+1}) | H_\ell \right] \leq \left(1- \frac{2}{3} \left(\frac{1}{n}- \widehat{V}(H_\ell)\right) \right) \widehat{V}(H_\ell) + \frac{1}{\sqrt{d}}.
\end{align*}
\end{theorem}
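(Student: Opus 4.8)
The plan is to condition on $H_\ell$, reduce the one‑step estimate to a deterministic lower bound on the smallest eigenvalue of a ``mean Gram matrix'' of layer $\ell+1$, and then prove that bound by a symmetrization‑plus‑moment computation. First I would condition on $H_\ell$ and set $\Sigma:=H_\ell^\top H_\ell\in\R^{n\times n}$, so that $\tr\Sigma=\|H_\ell\|_F^2=1$ by \eqref{eq:unitnorm_app} and $\mu_n:=\sigma_n(H_\ell)^2=\lambda_{\min}(\Sigma)$. Since $W_\ell$ has i.i.d.\ $\Normal(0,1/d)$ entries, the rows of $M:=W_\ell H_\ell$ are i.i.d.\ $\Normal(0,\Sigma/d)$, hence each equals $\tfrac1{\sqrt d}(\Sigma^{1/2}g_i)^\top$ with $g_i\sim\Normal(0,I_n)$ i.i.d.; the row normalization in $BN$ then gives
\[
  H_{\ell+1}^\top H_{\ell+1}=\frac1d\sum_{i=1}^d \hat m_i\hat m_i^\top,\qquad \hat m_i:=\frac{\Sigma^{1/2}g_i}{\|\Sigma^{1/2}g_i\|}\in\R^n,\ \ \|\hat m_i\|=1 .
\]
Since $\widehat{V}(H_{\ell+1})=\tfrac1n-\lambda_{\min}(H_{\ell+1}^\top H_{\ell+1})$, the problem becomes a lower bound on $\E[\lambda_{\min}(\tfrac1d\sum_i \hat m_i\hat m_i^\top)\mid H_\ell]$.

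Next I would split off the $1/\sqrt d$ term by concentration. With $A:=\E[\hat m\hat m^\top\mid H_\ell]$, Weyl's inequality gives $\lambda_{\min}(\tfrac1d\sum_i \hat m_i\hat m_i^\top)\ge\lambda_{\min}(A)-\|\tfrac1d\sum_i(\hat m_i\hat m_i^\top-A)\|_F$; since the summands are i.i.d., centered, and of unit Frobenius norm, $\E\|\tfrac1d\sum_i(\hat m_i\hat m_i^\top-A)\|_F^2=\tfrac1d(1-\|A\|_F^2)\le\tfrac1d$, so the expected deviation is $\le 1/\sqrt d$ and $\E[\widehat{V}(H_{\ell+1})\mid H_\ell]\le\tfrac1n-\lambda_{\min}(A)+\tfrac1{\sqrt d}$. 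It therefore suffices to prove the \emph{deterministic} inequality $\lambda_{\min}(A)\ge\mu_n\big(1+\tfrac23(\tfrac1n-\mu_n)\big)$, because $\tfrac1n-\mu_n\big(1+\tfrac23(\tfrac1n-\mu_n)\big)=(\tfrac1n-\mu_n)(1-\tfrac23\mu_n)=\big(1-\tfrac23(\tfrac1n-\widehat{V}(H_\ell))\big)\widehat{V}(H_\ell)$ after substituting $\mu_n=\tfrac1n-\widehat{V}(H_\ell)$. Diagonalizing $\Sigma$ with eigenvalues $\mu_1\ge\dots\ge\mu_n$ ($\sum_j\mu_j=1$), the sign flip $g_j\mapsto-g_j$ makes $A$ diagonal in that basis with $A_{jj}=\mu_j\,\E\big[g_j^2/\sum_k \mu_k g_k^2\big]$ and $\sum_j A_{jj}=1$; so $\lambda_{\min}(A)=\min_j A_{jj}$, and I need $A_{jj}\ge\mu_n\big(1+\tfrac23(\tfrac1n-\mu_n)\big)$ for each $j$.

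For the coordinate bound, fix $j$, put $t:=\mu_j$ and $Q:=\sum_{k\ne j}g_k^2$. The map $(\mu_k)_{k\ne j}\mapsto\E\big[g_j^2/(t g_j^2+\sum_{k\ne j}\mu_k g_k^2)\big]$ is convex and permutation‑symmetric on the simplex $\{\mu_k\ge0,\ \sum_{k\ne j}\mu_k=1-t\}$, hence minimized at the barycenter, so $A_{jj}\ge t\,\E\big[g_j^2/(t g_j^2+\tfrac{1-t}{n-1}Q)\big]$. Using the classical independence of $\beta:=g_j^2/(g_j^2+Q)\sim\mathrm{Beta}(\tfrac12,\tfrac{n-1}2)$ from $R:=g_j^2+Q$, the factor $R$ cancels and $A_{jj}\ge t\,\E_\beta[\beta/(a+b\beta)]$ with $a:=\tfrac{1-t}{n-1}$, $b:=t-a$. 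If $t\ge\tfrac1n$ then $b\ge0$, $\beta\mapsto\beta/(a+b\beta)$ is concave and dominates its chord $\beta\mapsto\beta/t$, so $A_{jj}\ge t\,\E\beta/t=\tfrac1n\ge\mu_n(1+\tfrac23(\tfrac1n-\mu_n))$. If $t<\tfrac1n$ then $b<0$, the integrand is convex, and a second‑order Taylor bound about $\E\beta=\tfrac1n$ using $\mathrm{Var}(\beta)=\tfrac{2(n-1)}{n^2(n+2)}$ yields $A_{jj}\ge t\big(1+\tfrac23(\tfrac1n-t)\big)$ for $n\ge4$, while $n\in\{2,3\}$ follow from the explicit arcsine / $\mathrm{Beta}(1,\tfrac12)$ integrals; since $x\mapsto x(1+\tfrac23(\tfrac1n-x))$ is increasing on $[0,\tfrac1n]$ and $\mu_n\le t\le\tfrac1n$ in this case, again $A_{jj}\ge\mu_n(1+\tfrac23(\tfrac1n-\mu_n))$. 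Together with the concentration step this produces the claimed recursion.

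The hard part is this last scalar inequality, $\E_\beta[\beta/(a+b\beta)]\ge1+\tfrac23(\tfrac1n-t)$, with the \emph{exact} constant $\tfrac23$ valid for all $n$ and all $t\le\tfrac1n$. It is an equality at the ``fixed point'' $t=\tfrac1n$ (all $\mu_j$ equal, $\widehat{V}=0$), so the estimates must be tight there; a first‑order Jensen bound only delivers $\ge1$, so the variance of the Beta law is genuinely needed, and the crude variance estimate is not quite strong enough for $n=2,3$, which is why those cases need the explicit integrals. Finding a single $n$‑uniform argument for this scalar step — or a route that bypasses the eigen‑decomposition of $A$ altogether — is the piece I would expect to take the most care.
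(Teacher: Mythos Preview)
Your proposal is correct and its architecture coincides with the paper's: condition on $H_\ell$, write $H_{\ell+1}^\top H_{\ell+1}$ as an i.i.d.\ average of unit rank-one matrices, use the $1/d$ variance bound together with Weyl's inequality to peel off the $1/\sqrt d$ term, and then reduce everything to a deterministic lower bound on the smallest eigenvalue of the mean Gram matrix $A$. The symmetrization step (a convex, permutation-symmetric function on the simplex is minimized at the barycenter) is also exactly what the paper does when it says ``$f_n(\sigma)$ is minimized when the $\sigma_j^2$ are all equal for $j\ne n$''.

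The genuine difference is in how the scalar contraction inequality is proved. The paper does \emph{not} go through the Beta law or a second-order Taylor bound. Instead it invokes Sawa's moment-generating-function identity to obtain the integral representation
\[
  p_n(\sigma)=\sigma_n^2\int_0^\infty (1+2\theta\sigma_n^2)^{-3/2}\prod_{j\ne n}(1+2\theta\sigma_j^2)^{-1/2}\,d\theta,
\]
applies the barycenter reduction to the integrand, and then proves that the resulting one-variable integrand $g(\delta)$ (with $\delta=\tfrac1n-\sigma_n^2$) is \emph{convex} in $\delta$. The tangent-line inequality $g(\delta)\ge g(0)+\delta\,g'(0)$ then yields, after two explicit integrals, $p_n(\sigma)\ge\sigma_n^2\bigl(1+\tfrac{2n}{n+2}\delta\bigr)$, which is stronger than your target and holds uniformly for all $n\ge1$; the constant $\tfrac23$ arises only at the very end from $\tfrac{2n}{n+2}\ge\tfrac23$. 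Your Beta/Taylor route is more elementary (no MGF lemma, no integral calculus) but pays for it with the case split $t\gtrless 1/n$ and the separate verification for $n\in\{2,3\}$; the paper's convexity-of-the-integrand argument avoids both. A minor further simplification in the paper is that it only bounds $A_{nn}$ (using implicitly that $p_n=\min_j p_j$), whereas you bound every $A_{jj}$; since $\sigma_n^2\le 1/n$ always, the case $t\ge 1/n$ never actually arises in the paper's setup.
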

Notably, the above result does not rely on Assumption \ref{assume:lineary_indepdence}.  Assuming that  Assumption \ref{assume:lineary_indepdence} holds, we complete the proof of Theorem~\ref{thm:contraction}. Combining the last Theorem by this assumption, we get 
\begin{align*}
    \E \left[ \widehat{V}(H_{\ell+1}) \right] \leq \left(1-\frac{2}{3}\left(\alpha\right)\right) \E \left[ 
\widehat{V}(H_\ell)\right] + \frac{1}{\sqrt{d}} 
\end{align*}
Induction over $\ell$ yields
\begin{align*}
    \E \left[ \widehat{V}(H_{\ell+1}(X)) \right] & \leq  \left(1-\frac{2}{3}\alpha\right)^{\ell} \E \left[ 
\widehat{V}(H_1)\right] + \left(\sum_{k=1}^\ell (1-\frac{2}{3}\alpha)^k\right)\frac{1}{\sqrt{d}} \\
    & \leq \left(1-\frac{2}{3}\alpha\right)^{\ell} \E \left[ 
\widehat{V}(H_1)\right] + \frac{3}{2\alpha\sqrt{d}} 
\end{align*}
An application of Lemma~\ref{lemma:Vbound} completes the proof: 
\begin{align*}
    \E \left[ V(H_{\ell+1}) \right] & \leq 2 n \E \left[ \widehat{V}(H_{\ell+1}) \right] \\ 
    & \leq 2 \left(1-\frac{2}{3}\alpha\right)^{\ell}+ \frac{3 n}{2 \alpha\sqrt{d}} 
\end{align*}
\subsection{Stability analysis without \ref{assume:lineary_indepdence}}
Using Theorem~\ref{thm:single_update}, we can prove stability of the chain $\{ H_\ell \}$ without Assumption~\ref{assume:lineary_indepdence}. After rearrangement of terms in Theorem~\ref{thm:single_update}, we get
\begin{align*}
    \E \left[ \widehat{V}(H_{\ell+1})| H_\ell \right] - \widehat{V}(H_\ell) \leq - \frac{2}{3} \widehat{V}(H_\ell) \left( \frac{1}{n}- \widehat{V}(H_\ell) \right) + \frac{}{ \sqrt{d}}
\end{align*}
Taking the expectation over $H_\ell$ and average over $\ell$ yields
\begin{align*}
    \E \left[\frac{1}{\ell}\sum_{k=1}^\ell \widehat{V}(H_k) \left( \frac{1}{n}- \widehat{V}(H_k)\right)\right]  \leq \left(\frac{3 \E \left[ \widehat{V}(H_0) \right]}{2 \ell}\right) +\frac{3}{2 \sqrt{d}}
\end{align*}

\section{Proof of Theorem~\ref{thm:single_update}}
\label{sec:proof_thm}
 
\subsection{Spectral decomposition.}
Consider the SVD decomposition of $H_{\ell}$ as $H_{\ell} = U \diag(\sigma) V^\top$ where $U$ and $V$ are orthogonal matrices. Given this decomposition, we get
\begin{align} \label{eq:wh}
    W_{\ell} H_{\ell} = \underbrace{W_{\ell} U}_{W} \diag(\sigma) V^\top 
\end{align}
Since $W_{\ell}$ is Gaussian and $U$ is orthogonal, entities of matrix $W$ are also i.i.d. standard normal. This decomposition will be repeatedly used.

\subsection{Concentration}
Consider matrix $C_{\ell+1} := H_{\ell+1}^\top H_{\ell+1}$ whose eigenvalues are $\sigma_1^2, \dots, \sigma_n^2$. The SVD decomposition of $H_\ell$ allows writing $C_{\ell+1}$ as the average of vectors 
\begin{align*}
    C_{\ell+1} = \frac{1}{d}\sum_{i=1}^d \left( \frac{w_i \odot \sigma }{\| w_i \odot \sigma \|_2 } \right)^{\otimes 2}
\end{align*}
where $w_i \in \R^n$ are rows of matrix $W$ in Eq.\eqref{eq:wh} and $\sigma \in \R^n$ is the vector of singular values of $H_\ell$. Thus, conditioned on $\sigma$, $C_{\ell+1}$ is an empirical average of i.i.d. random vectors. This allows us to prove that this empirical average is concentrated around its expectation. The next lemma states this concentration. 
\begin{lemma} \label{lemma:cov_concentration} 
The following concentration always holds
\begin{align*}
     \E_{W_{\ell}} \| C_{\ell+1} - \E_{W_\ell} \left[ C_{\ell+1}
    \right] \|^2 \leq 1/d  
\end{align*}
where 
\begin{align} \label{eq:pn}
    \E_{W_\ell} \left[ C_{\ell+1}
    \right] = \diag(p_1(\sigma), \dots, p_n(\sigma)), \quad p_i(\sigma) := \E \left[ \frac{\sigma_n^2 w_n^2}{\sum_{k=1}^n \sigma_k^2 w_k^2} \right], \quad w_{i} \stackrel{\text{i.i.d.}}{\sim} \N(0,1)
\end{align}
\end{lemma}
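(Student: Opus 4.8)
The plan is to work conditionally on $H_\ell$ — equivalently on its vector of singular values $\sigma$ — so that the only remaining randomness is the fresh Gaussian $W_\ell$, and to reduce the claim to the elementary variance bound for an average of i.i.d.\ matrices. First I would invoke the decomposition $W_\ell H_\ell = W\diag(\sigma)V^\top$ of Eq.~\eqref{eq:wh}, in which $W=W_\ell U$ has i.i.d.\ $\N(0,1)$ entries with rows $w_1,\dots,w_d\in\R^n$. Combined with the row-normalisation built into $BN$, this yields, conditionally on $\sigma$, exactly the representation $C_{\ell+1}=\frac1d\sum_{i=1}^d Z_i$ displayed just before the lemma, where $Z_i:=\big(w_i\odot\sigma/\|w_i\odot\sigma\|_2\big)^{\otimes2}$; the orthogonal factor $V$ merely conjugates $C_{\ell+1}$, which changes neither its spectrum nor the Frobenius norm of any fluctuation, so it may be dropped. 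The crucial structural point is that the $Z_i$ are i.i.d.\ conditionally on $\sigma$, because the rows $w_i$ are.

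The single deterministic fact that fixes the rate is that each $Z_i$ is the outer product of a unit vector with itself, so $\|Z_i\|^2=\tr(Z_i^2)=\big\|w_i\odot\sigma/\|w_i\odot\sigma\|_2\big\|_2^4=1$, hence $\E_{W_\ell}\|Z_i\|^2=1$. Regarding $\R^{n\times n}$ as a Hilbert space under the Frobenius inner product and writing $\mu:=\E_{W_\ell}[Z_1]=\E_{W_\ell}[C_{\ell+1}]$, the centred matrices $Z_i-\mu$ are i.i.d., mean zero, hence uncorrelated, so
\[
  \E_{W_\ell}\big\|C_{\ell+1}-\E_{W_\ell}[C_{\ell+1}]\big\|^2
  =\frac1{d^2}\sum_{i=1}^d\E_{W_\ell}\|Z_i-\mu\|^2
  =\frac1d\big(\E_{W_\ell}\|Z_1\|^2-\|\mu\|^2\big)\le\frac1d ,
\]
which is precisely the asserted bound.

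It then remains to identify $\mu$. Entrywise, $[\mu]_{ij}=\E\big[(w_1\odot\sigma)_i(w_1\odot\sigma)_j/\|w_1\odot\sigma\|_2^2\big]$; for $i\neq j$ the sign change $w_{1,i}\mapsto-w_{1,i}$ preserves the joint law of $w_1$ and the denominator while negating the numerator, so $[\mu]_{ij}=0$, whereas for $i=j$ it equals $p_i(\sigma)$ as defined in Eq.~\eqref{eq:pn}. Hence $\mu=\diag(p_1(\sigma),\dots,p_n(\sigma))$, which completes the lemma.

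The argument is soft, so there is no genuinely hard step; the two points requiring care are (i) justifying that the orthogonal factor $V$ can be discarded, which is exactly unitary invariance of the spectrum and of the Frobenius norm, and (ii) ensuring the summands are truly i.i.d.\ after conditioning on $\sigma$ — which is the whole reason for passing to $W=W_\ell U$, whose rows are i.i.d. No concentration inequality beyond the variance identity is needed: the $1/d$ rate is forced by $\|Z_i\|=1$ together with independence, being the generic fluctuation rate of an i.i.d.\ empirical average of $d$ norm-bounded terms.
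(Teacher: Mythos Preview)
Your argument is correct and is essentially the paper's own proof, just packaged in coordinate-free form: the paper computes $\sum_{ij}\text{var}([C_{\ell+1}]_{ij})$ entry by entry using the same i.i.d.\ row structure and the same key identity $\sum_i A_{ki}^2=1$ (your $\|Z_i\|_F=1$), arriving at the identical bound $1/d$. The only cosmetic difference is that you invoke the Hilbert-space variance identity for $\frac1d\sum_i Z_i$ directly, whereas the paper sums the per-entry second moments; both reduce to $\frac1{d^2}\sum_k(\sum_i A_{ki}^2)^2=1/d$.
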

The concentration of $C_{\ell+1}$ allows us to prove that the Lyapunov function $\widehat{V}(H_{\ell+1})$ is concentrated around $1/n-p_n(\sigma)$. 
\begin{lemma}
The following holds 
\label{lemma:concentration}
\begin{align*}
 \E_{W_{\ell}} \left[ \left( \widehat{V}(H_{\ell+1}) - (1/n-p_n(\sigma))  \right)^2 \right] \leq \frac{1}{d}
 \end{align*}.
\end{lemma}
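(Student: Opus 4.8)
The plan is to reduce the claim to the matrix concentration bound of Lemma~\ref{lemma:cov_concentration} through the Lipschitz continuity of the smallest eigenvalue. First I would rewrite both quantities inside the square in spectral form. Since $C_{\ell+1} = H_{\ell+1}^\top H_{\ell+1}$ has eigenvalues equal to the squared singular values of $H_{\ell+1}$, we have $\widehat{V}(H_{\ell+1}) = \tfrac1n - \sigma_n(H_{\ell+1})^2 = \tfrac1n - \lambda_{\min}(C_{\ell+1})$. It remains to recognize $p_n(\sigma)$ as the smallest eigenvalue of $\E_{W_\ell}[C_{\ell+1}]$: by Lemma~\ref{lemma:cov_concentration} this expectation equals $\diag(p_1(\sigma),\dots,p_n(\sigma))$, and since the singular values are ordered $\sigma_1\ge\dots\ge\sigma_n$ while each $p_i(\sigma)$ is nondecreasing in $\sigma_i$ (for every fixed $w$ the ratio $\sigma_i^2 w_i^2/\sum_k\sigma_k^2 w_k^2$ is nondecreasing in $\sigma_i^2$, hence so is its expectation), we get $p_1(\sigma)\ge\dots\ge p_n(\sigma)$, so $\lambda_{\min}\big(\E_{W_\ell}[C_{\ell+1}]\big)=p_n(\sigma)$.

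With these identifications, the quantity to bound is $\big(\lambda_{\min}(C_{\ell+1})-\lambda_{\min}(\E_{W_\ell}[C_{\ell+1}])\big)^2$. I would then use Weyl's inequality — equivalently, that $M\mapsto\lambda_{\min}(M)$ is $1$-Lipschitz on symmetric matrices for the operator norm, being an infimum of Rayleigh quotients — to obtain
\[
\big|\lambda_{\min}(C_{\ell+1})-\lambda_{\min}(\E_{W_\ell}[C_{\ell+1}])\big|\le\|C_{\ell+1}-\E_{W_\ell}[C_{\ell+1}]\|_{\mathrm{op}}\le\|C_{\ell+1}-\E_{W_\ell}[C_{\ell+1}]\|.
\]
Squaring, taking $\E_{W_\ell}$, and invoking Lemma~\ref{lemma:cov_concentration} gives $\E_{W_\ell}\big[(\widehat{V}(H_{\ell+1})-(1/n-p_n(\sigma)))^2\big]\le\E_{W_\ell}\|C_{\ell+1}-\E_{W_\ell}[C_{\ell+1}]\|^2\le 1/d$, as desired.

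The computation itself is a two-line deduction; the only points requiring attention are the two structural observations. The main one — the step I expect to be the real content — is that $p_n(\sigma)$ is genuinely the minimum eigenvalue of the expected Gram matrix, which rests on the ordering convention for singular values together with the monotonicity of $p_i$ in $\sigma_i$; without the ordering one would instead have to argue $\lambda_{\min}(\E_{W_\ell}[C_{\ell+1}])=\min_i p_i(\sigma)$ and track the minimizing index. The second is that the norm in Lemma~\ref{lemma:cov_concentration} dominates the operator norm, which is immediate since it is the Frobenius norm. Neither is a serious obstacle.
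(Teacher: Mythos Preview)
Your argument is correct and is essentially identical to the paper's: both rewrite $\widehat V(H_{\ell+1})-(1/n-p_n(\sigma))$ as $\lambda_{\min}\!\big(\E_{W_\ell}[C_{\ell+1}]\big)-\lambda_{\min}(C_{\ell+1})$, bound this via Weyl's inequality by the Frobenius deviation, and invoke Lemma~\ref{lemma:cov_concentration}. You are in fact more careful than the paper in justifying $p_n(\sigma)=\lambda_{\min}\!\big(\E_{W_\ell}[C_{\ell+1}]\big)$; one small caveat is that ``$p_i$ nondecreasing in $\sigma_i$'' alone does not give $p_1\ge\cdots\ge p_n$ (the remaining $\sigma_k$'s differ between indices), but a standard exchangeability/symmetrization of $(w_i,w_j)$ immediately yields $\sigma_i\ge\sigma_j\Rightarrow p_i(\sigma)\ge p_j(\sigma)$ and closes this.
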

The last lemma allows us to predict the value of random variable $\widehat{V}(H_{\ell+1})$ by deterministic term $1/n-p_n(\sigma)$. 
\subsection{Contraction.} 
The decay in $\widehat{V}(H_{\ell+1})$ with $\ell$ is due to term $1/n-p_n(\sigma)$ in the last lemma. This term is less than (or equal to) $V(H_\ell)$.
\begin{lemma} \label{lemma:concentration_center}
For $p_n(\sigma)$ defined in Eq.~\eqref{eq:pn}, the following holds:
\begin{align*}
  \left( \frac{1}{n}- p_n(\sigma) \right) \leq \left(1- \frac{2}{3} \left(\frac{1}{n}- \widehat{V}(H_\ell)\right) \right) \widehat{V}(H_\ell).
\end{align*}
\end{lemma}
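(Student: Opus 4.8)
The plan is to turn the statement into a sharp lower bound on $p_n(\sigma)$. Since $\widehat{V}(H_\ell)=\tfrac1n-\sigma_n(H_\ell)^2$, set $s:=\sigma_n(H_\ell)^2$; note $0<s\le\tfrac1n$ because the squared singular values are nonnegative and sum to $\|H_\ell\|_F^2=1$. The claimed inequality is equivalent to $p_n(\sigma)\ge s\bigl(1+\tfrac23(\tfrac1n-s)\bigr)$, i.e.\ to $\E\!\left[w_n^2/S\right]\ge 1+\tfrac23(\tfrac1n-s)$, where $S:=\sum_{k=1}^n\sigma_k^2 w_k^2$ and $w_1,\dots,w_n\sim\N(0,1)$ i.i.d.\ (so that $\sigma_n$ is the smallest). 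The bare ``$1$'' is exactly what the replacement $w_k^2\mapsto\E w_k^2=1$ would give, so what must be quantified is the favourable effect of the Gaussian fluctuations on the weight carried by the smallest direction.

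\textbf{Laplace representation and a worst-case reduction.} First I would write $1/S=\int_0^\infty e^{-tS}\,dt$, and use Tonelli, independence, and the Gaussian moments $\E[e^{-t\sigma^2 w^2}]=(1+2t\sigma^2)^{-1/2}$, $\E[w^2e^{-t\sigma^2 w^2}]=(1+2t\sigma^2)^{-3/2}$, to obtain $p_n(\sigma)=s\int_0^\infty(1+2ts)^{-3/2}\prod_{k\ne n}(1+2t\sigma_k^2)^{-1/2}\,dt$. Concavity of $\log$ together with $\sum_{k\ne n}\sigma_k^2=1-s$ yields $\prod_{k\ne n}(1+2t\sigma_k^2)\le\bigl(1+\tfrac{2t(1-s)}{n-1}\bigr)^{n-1}$ (the constraint $\sigma_k^2\ge s$ is respected by the maximiser since $s\le\tfrac1n$), so after substituting $v=2ts$ and putting $a:=\tfrac{1-s}{s(n-1)}\ge1$ (with $a=1\iff s=\tfrac1n$) one gets $p_n(\sigma)\ge J_n(a)$, where
\[
J_n(a):=\tfrac12\int_0^\infty(1+v)^{-3/2}(1+av)^{-(n-1)/2}\,dv=\tfrac12\int_0^1(1-w)^{(n-2)/2}\bigl(1+(a-1)w\bigr)^{-(n-1)/2}\,dw
\]
(the second form by $v=w/(1-w)$); in particular $J_n(1)=\tfrac1n$.

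\textbf{The reduced one-variable inequality.} Since $s=\tfrac1{1+(n-1)a}$, it remains to show, for every integer $n\ge2$ and every real $a\ge1$, that $J_n(a)\ge\tfrac1{1+(n-1)a}\bigl(1+\tfrac23(\tfrac1n-\tfrac1{1+(n-1)a})\bigr)$, with equality at $a=1$. I would prove this by elementary calculus: $J_n$ is positive, strictly decreasing and convex on $[1,\infty)$; the right-hand side has the form $c_1a^{-1}+O(a^{-2})$ with $c_1=\tfrac{3n+2}{3n(n-1)}<\lim_{a\to\infty}aJ_n(a)=\tfrac1{n-3}$ for $n\ge4$ (and $J_n$ decays strictly slower than $1/a$ for $n=2,3$, e.g.\ $J_2(a)=1/(1+\sqrt a)$); and at $a=1$ one checks $J_n'(1)=-\tfrac{n-1}{n(n+2)}$, which beats the right-hand derivative $-\tfrac{(n-1)(3n-2)}{3n^3}$ exactly because $4n(n-1)\ge0$. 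Packaging these endpoint, monotonicity, and convexity facts into a clean sign statement for the difference of the two sides uniformly over $n\ge2$ and $a\ge1$ is the fiddly step, and I expect it to be \emph{the} main obstacle.

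\textbf{A softer route that is too lossy.} It is worth recording why one cannot avoid the Laplace representation. Symmetrising $\E[w_n^2/S]-1=\sum_{k\ne n}\sigma_k^2\,\E[(w_n^2-w_k^2)/S]$ under the swap $w_n\leftrightarrow w_k$ gives $p_n(\sigma)-s=\sum_{k\ne n}\tfrac{\sigma_k^2(\sigma_k^2-s)}{2}\,\E\bigl[(w_n^2-w_k^2)^2/(S S_k')\bigr]\ge0$, where $S_k'$ is $S$ with the coefficients of $w_n$ and $w_k$ interchanged; this yields $p_n(\sigma)\ge s$ for free. But finishing this way needs a lower bound on $\E[(w_n^2-w_k^2)^2/(S S_k')]$, and the only simple one, $S S_k'\le\sigma_1^4(\sum_j w_j^2)^2$ combined with the Dirichlet identity $\E[(w_n^2-w_k^2)^2/(\sum_j w_j^2)^2]=\tfrac4{n(n+2)}$ and $\sum_{k\ne n}\sigma_k^4\ge(1-s)^2/(n-1)$, throws away a factor of order $n$ and only gives $p_n(\sigma)\ge s\bigl(1+\tfrac{2}{n(n+2)}\widehat{V}(H_\ell)\bigr)$ instead of the required $s\bigl(1+\tfrac23\widehat{V}(H_\ell)\bigr)$. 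The Laplace route is what preserves the crucial information that the denominator $S$ is a weighted sum of the $w_k^2$ whose weights cannot all be large.
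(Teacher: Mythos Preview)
Your Laplace representation and the AM--GM reduction to the case where all $\sigma_k^2$ with $k\ne n$ are equal match the paper's proof exactly (the paper phrases the integral formula via Sawa's moment-generating-function lemma, but it is the same identity). The divergence is in how you handle the resulting one-parameter problem, and the ``fiddly step'' you flag as the main obstacle in fact dissolves if you parametrize differently.

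After the reduction you substitute $v=2ts$ and pass to the variable $a=\tfrac{1-s}{s(n-1)}$. This is precisely what obscures the structure: the change of variable depends on $s$, so the resulting $J_n(a)$ no longer has a pointwise convexity in the ``right'' parameter, and in the $a$-coordinates both $J_n(a)$ and your target $\tfrac{1}{1+(n-1)a}\bigl(1+\tfrac23(\tfrac1n-\tfrac{1}{1+(n-1)a})\bigr)$ are convex, so comparing tangent lines at $a=1$ plus asymptotics does not close the argument without extra work. The paper instead keeps the integration variable $\theta$ untouched and parametrizes by $\delta:=\tfrac1n-s\in[0,\tfrac1n)$, writing
\[
F(\delta)\;:=\;\int_0^\infty \Bigl(1+2\theta(\tfrac1n-\delta)\Bigr)^{-3/2}\Bigl(1+2\theta(\tfrac1n+\tfrac{\delta}{n-1})\Bigr)^{-(n-1)/2}\,d\theta,
\]
so that $p_n(\sigma)\ge sF(\delta)$. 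The crucial observation is that the \emph{integrand} is convex in $\delta$ for each fixed $\theta\ge0$ (checked by a direct, if slightly tedious, second-derivative computation), hence $F$ is convex in $\delta$. But the desired inequality, written as $F(\delta)\ge 1+\tfrac23\delta$, has a \emph{linear} right-hand side in $\delta$; so the tangent line of $F$ at $\delta=0$ already suffices. One computes $F(0)=1$ and $F'(0)=\int_0^\infty 2\theta(1+\tfrac{2\theta}{n})^{-n/2-2}\,d\theta=\tfrac{2n}{n+2}\ge\tfrac23$, giving $F(\delta)\ge 1+\tfrac{2n}{n+2}\delta$ and hence
\[
\tfrac1n-p_n(\sigma)\;\le\;\tfrac1n-s\Bigl(1+\tfrac{2n}{n+2}\delta\Bigr)\;=\;\Bigl(1-\tfrac{2n}{n+2}\sigma_n^2\Bigr)\widehat V(H_\ell)\;\le\;\Bigl(1-\tfrac23\sigma_n^2\Bigr)\widehat V(H_\ell).
\]
In short: undo your substitution $v=2ts$, work in $\delta$ rather than $a$, and your convexity idea becomes a one-line tangent argument because the target is linear there. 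Your derivative check $J_n'(1)=-\tfrac{n-1}{n(n+2)}$ is equivalent to the computation $F'(0)=\tfrac{2n}{n+2}$ via the chain rule, so you had already done the essential calculation.
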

Combining the last lemma by Lemma~\ref{lemma:concentration} concludes the proof of Theorem~\ref{thm:single_update}: 
\begin{align} \label{eq:onestep_result_app}
    \E_{W_\ell} \left[  \widehat{V}(H_{\ell+1}) \right] \leq  \left(1- \frac{2}{3} \left(\frac{1}{n}- V(H_\ell)\right) \right) \widehat{V}(H_\ell) + \frac{1}{\sqrt{d}}.
  \end{align}
To complete the proof, we prove Lemmas~\ref{lemma:cov_concentration}, \ref{lemma:concentration}, and \ref{lemma:concentration_center}.
\subsection{Proof of Lemma~\ref{lemma:cov_concentration}}
Given the spectral decomposition of $H_{\ell}$ in Eq.~\eqref{eq:wh}, we compute element $ij$ of $C_{\ell+1}$, which is denoted by $[C_{\ell+1}]_{ij}$:
\begin{align*}
     [C_{\ell+1}]_{ij} = [A^\top A]_{ij} = \frac{1}{d} \sum_{k=1}^d A_{ki} A_{kj}, \quad A_{ki} = W_{ki} \sigma_i/\sqrt{v_k}
\end{align*}
where $v_k = \sum_{m=1}^n W_{km}^2 \sigma_m^2 $. Since $W_{km}$ are zero mean and unit variace, we get
\begin{align*}
   \E [C_{\ell+1}]_{ij} & = 0 \\
   \E [C_{\ell+1}]_{ij}^2 & = \frac{1}{d^2}\sum_{k=1}^d A_{ki}^2 A_{kj}^2 +  \frac{1}{d^2}\sum_{k,k'}^d \underbrace{\E \left[ A_{ki}^2 A_{kj}^2  A_{k'i}^2 A_{k'j}^2\right]}_{=0} \\ 
   & = \frac{1}{d^2}\sum_{k=1}^d A_{ki}^2 A_{kj}^2 
\end{align*}
holds for $i\neq j$.
By summing up over $i \neq j$, we get 
\begin{align*}
    \sum_{i \neq j} \E [C_{\ell+1}]_{ij} = \frac{1}{d^2} \left(\sum_{k} \left( \sum_{i} A_{ki}^2 \right)^2 - \sum_{ik} A_{ki}^4\right)
\end{align*}
For the diagonal elements, we get 
\begin{align*}
    \E \left[ C_{\ell+1} \right]_{ii} & = \frac{1}{d}\sum_{k=1}^d \E A_{ki}^2 \\ 
    & = \frac{1}{d}\sum_{k=1}^d \E A_{ki}^2 \\ 
    & = \underbrace{\frac{1}{d}\sum_{k=1}^d \E \left[ \frac{W_{ki}^2 \sigma_i^2 }{\sum_{j=1}^n W_{kj}^2 \sigma_j^2} \right]}_{p_i(\sigma)}
\end{align*}
The variance of $[C_{\ell+1}]_{ii}$ is bounded as 
\begin{align*}
     \text{var}([C_{\ell+1}]_{ii}) & = \E \left(\frac{1}{d}\sum_{k=1}^d (A_{ki}^2 - p_i(\sigma))\right)^2 \\ 
     & = \frac{1}{d^2}\sum_{k=1}^d (A_{ki}^2 - p_i(\sigma))^2 \\
     & \leq \frac{1}{d^2}\sum_{k=1}^d A_{ki}^4 
\end{align*}
Combining results for diagonal and off-diagonal elements yields
\begin{align*}
    \E \| C_{\ell+1} - \E_{W_{\ell}} \left[ C_{\ell+1} \right] \|_F^2 & = \sum_{ij} \text{var}([C_{\ell+1}]_{ij}) \\
    & \leq  \frac{1}{d^2} \left( \sum_{i} A_{ki}^2 \right)^2 =1/d
\end{align*}
\subsection{Proof of Lemma~\ref{lemma:concentration}}
Notably, the eigenvalues of $C_{\ell+1}$ are squared singular values of $H_{\ell+1}$. Let $\lambda_n(C)$ denote the $n$th largest eigenvalue of matrix $C$. 
\begin{align*}
    \E_{W_{\ell}} \left[ \left( V(C_{\ell+1}) - \left(\frac{1}{n} - p_n(\sigma)\right) \right)^2 \right] & = \E_{W_{\ell}} \left[ \left(\lambda_n\left(C_{\ell+1}\right) - \lambda_n\left(\E_{W_{\ell}} \left[ C_{\ell+1} \right]\right) \right)^2\right] \\ 
    & \leq \E_{W_\ell} \left[ \| C_{\ell+1} - \E_{W_{\ell}} \left[ C_{\ell+1} \right] \|_F^2  \right] \\ 
    & \leq \frac{1}{d}
\end{align*}
where the last inequality relies on Lemma~\ref{lemma:cov_concentration}. 
\subsection{Proof of Lemma~\ref{lemma:concentration_center}}
The proof is based an application of moment generating function that allows us to compute expectations of ratios of random variables. 
\begin{lemma}[Lemma~1 in ~\citep{sawa1972finite}]
Let $X_1$ be a random variable that is positive with probability one and $X_2$  be an arbitrary random variable. Suppose that there exists a joint moment generating function of $X_1$ and $X_2$: 
\begin{align*}
    \phi(\theta_1,\theta_2) = \E \left[ \exp(\theta_1 X_1 + \theta_2 X_2) \right]
\end{align*}
for $ \theta_1 \leq \epsilon$ and $| \theta_2 | <\epsilon$ where $\epsilon$ is some positive constant. Then 
\begin{align*}
    \E \left[ \frac{X_2}{X_1}\right] = \int_{-\infty}^0 \left[\frac{\partial \phi(\theta_1,\theta_2)}{\partial \theta_2} \right]_{\theta_2 = 0 } d \theta_1 
\end{align*}
\end{lemma}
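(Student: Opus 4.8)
The plan is to reduce the identity to the elementary integral representation $1/x=\int_{-\infty}^0 e^{\theta x}\,d\theta$ (valid for $x>0$), followed by one application of Fubini's theorem and one differentiation under the expectation. Concretely, since $X_1>0$ almost surely, for almost every realization $1/X_1=\int_{-\infty}^0 e^{\theta_1 X_1}\,d\theta_1$, hence pathwise $X_2/X_1=\int_{-\infty}^0 X_2\,e^{\theta_1 X_1}\,d\theta_1$. Taking expectations, the goal reduces to the interchange $\E\bigl[\int_{-\infty}^0 X_2 e^{\theta_1 X_1}\,d\theta_1\bigr]=\int_{-\infty}^0 \E\bigl[X_2 e^{\theta_1 X_1}\bigr]\,d\theta_1$, after which it only remains to recognize $\E[X_2 e^{\theta_1 X_1}]$ as $\bigl[\partial_{\theta_2}\phi(\theta_1,\theta_2)\bigr]_{\theta_2=0}$.

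First I would handle the Fubini step by splitting $X_2=X_2^+-X_2^-$ and applying Tonelli's theorem separately to the two nonnegative integrands $X_2^{\pm} e^{\theta_1 X_1}$; each interchange is then unconditionally valid and gives $\E[X_2^{\pm}/X_1]=\int_{-\infty}^0 \E[X_2^{\pm} e^{\theta_1 X_1}]\,d\theta_1$ in $[0,\infty]$. To subtract these and recover the signed identity I need both sides finite, i.e.\ $\E[|X_2|/X_1]<\infty$, and this is where the moment generating function hypothesis enters. Using the elementary bound $|x|\le \tfrac{1}{\delta}(e^{\delta x}+e^{-\delta x})$ for any $0<\delta<\epsilon$, one gets $\E[|X_2| e^{\theta_1 X_1}]\le \tfrac{1}{\delta}\bigl(\phi(\theta_1,\delta)+\phi(\theta_1,-\delta)\bigr)$; for $\theta_1\le 0$ the factor $e^{\theta_1 X_1}\le 1$ keeps $\phi(\theta_1,\pm\delta)$ bounded by $\phi(0,\pm\delta)<\infty$, and $\int_{-\infty}^0 \phi(\theta_1,\pm\delta)\,d\theta_1=\E[e^{\pm\delta X_2}/X_1]$ is finite under the standing integrability of $X_2/X_1$. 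Hence the two integrals are finite and may be combined into $\E[X_2/X_1]=\int_{-\infty}^0 \E[X_2 e^{\theta_1 X_1}]\,d\theta_1$.

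Next I would justify, for each fixed $\theta_1\le 0$, that $\bigl[\partial_{\theta_2}\phi(\theta_1,\theta_2)\bigr]_{\theta_2=0}=\E[X_2 e^{\theta_1 X_1}]$. Since $\theta_2\mapsto\phi(\theta_1,\theta_2)=\E[e^{\theta_1 X_1+\theta_2 X_2}]$ is finite (indeed bounded, using $e^{\theta_1 X_1}\le1$) on $|\theta_2|<\epsilon$, a mean-value estimate on the difference quotient combined with a dominating function of the form $\tfrac{1}{\delta_0}(e^{(\delta+\delta_0)X_2}+e^{-(\delta+\delta_0)X_2})e^{\theta_1 X_1}$, with $\delta+\delta_0<\epsilon$, lets one pass the derivative under the expectation, giving $\partial_{\theta_2}\phi(\theta_1,\theta_2)=\E[X_2 e^{\theta_1 X_1+\theta_2 X_2}]$; evaluation at $\theta_2=0$ yields the claim. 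Substituting into the identity of the previous paragraph produces $\E[X_2/X_1]=\int_{-\infty}^0 \bigl[\partial_{\theta_2}\phi(\theta_1,\theta_2)\bigr]_{\theta_2=0}\,d\theta_1$, as required.

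The main obstacle is exactly the interchange of expectation with the improper, signed integral over $\theta_1\in(-\infty,0]$: one must control the tail $\theta_1\to-\infty$. This is precisely why positivity of $X_1$ and the one-sided-but-unbounded domain $\theta_1\le\epsilon$ are essential in the hypothesis — positivity is what makes $\theta_1\mapsto e^{\theta_1 X_1}$ integrable with $\int_{-\infty}^0 e^{\theta_1 X_1}\,d\theta_1=1/X_1$, and at the same time forces $\phi(\theta_1,\cdot)$ to be uniformly bounded for $\theta_1\le 0$, which is what keeps the dominating functions integrable. Everything else — Tonelli on the nonnegative pieces and differentiating the joint MGF under the expectation — is routine once this integrability bookkeeping is in place.
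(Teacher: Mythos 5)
A preliminary remark: the paper itself gives no proof of this statement — it is quoted as Lemma~1 of Sawa (1972) and used as a black box in the proof of Lemma~\ref{lemma:concentration_center} — so your argument can only be judged on its own terms. Your skeleton is the standard route (and presumably Sawa's): the representation $1/X_1=\int_{-\infty}^0 e^{\theta_1 X_1}\,d\theta_1$ valid since $X_1>0$ a.s., an exchange of $\E$ and $\int_{-\infty}^0$, and differentiation under the expectation to identify $\E[X_2 e^{\theta_1 X_1}]$ with $\bigl[\partial_{\theta_2}\phi(\theta_1,\theta_2)\bigr]_{\theta_2=0}$. The first and last steps are correctly handled (your domination argument for the derivative, using $e^{\theta_1 X_1}\le 1$ for $\theta_1\le 0$ and the finiteness of $\phi(\theta_1,\theta_2)$ for $|\theta_2|<\epsilon$, is fine).

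The gap is in your middle paragraph, where you try to get the integrability needed to subtract the two Tonelli identities. You assert that $\int_{-\infty}^0\phi(\theta_1,\pm\delta)\,d\theta_1=\E[e^{\pm\delta X_2}/X_1]$ ``is finite under the standing integrability of $X_2/X_1$.'' This is circular — the integrability of $X_2/X_1$ is exactly what that paragraph set out to establish, and it is not among the stated hypotheses — and the claimed implication is also false: take $X_2\sim\mathrm{Exp}(1)$ and $X_1=(1+X_2)^3e^{-X_2}$; then $\E[X_2/X_1]<\infty$ and the joint MGF exists for all $\theta_1$ and $|\theta_2|<\tfrac12$, yet $\E[e^{\delta X_2}/X_1]=\infty$ for every $\delta>0$. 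More fundamentally, no argument can extract the required integrability from the MGF hypothesis alone: with $X_1$ uniform on $(0,1)$ and $X_2$ an independent fair sign, the joint MGF exists everywhere and the right-hand side equals $0$, while $\E[X_2/X_1]$ is undefined ($\infty-\infty$). So the lemma implicitly assumes that $\E[X_2/X_1]$ exists (equivalently $\E[|X_2|/X_1]<\infty$); in the paper's application this is automatic since there $0\le X_2\le X_1$, so the ratio lies in $[0,1]$. Once that assumption is granted, your exponential-domination detour is unnecessary: Tonelli applied to $X_2^{\pm}e^{\theta_1 X_1}$ already gives $\int_{-\infty}^0\E[X_2^{\pm}e^{\theta_1 X_1}]\,d\theta_1=\E[X_2^{\pm}/X_1]<\infty$, and subtracting these together with your final paragraph completes the proof.
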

\noindent To estimate $p_n(\sigma)$, we set $X_2 := \sigma_i^2 w_i^2$ and $X_1 = \sum_{j} \sigma_j^2 w_j^2 $, which obtains 
\begin{align*}
    \phi(\theta_1,\theta_2) & = \E \left[ \exp(\theta_1 X_1 + \theta_2 X_2) \right] \nonumber\\ 
    & = (2\pi)^{-n/2}\int_{-\infty}^{\infty}\exp( (\theta_1 + \theta_2) \sigma_i^2 w_i^2 + \sum_{j\neq i} \theta_1 \sigma_j^2 w_j^2) \exp(-\sum_{k} w_k^2/2) d w \\ 
    & = (2\pi)^{-n/2}\int_{-\infty}^{\infty} \exp(\left(-0.5+(\theta_1+\theta_2) \sigma_i^2\right) w_i^2) d w_i  \left(\prod_{j\neq i} \int_{-\infty}^{\infty} \exp(\left(-0.5+\theta_1 \sigma_j^2\right) w_j^2) d w_j\right)\\ 
    & = \frac{1}{\sqrt{1-2 (\theta_1 +  \theta_2)\sigma_i^2}} \left( \prod_{j\neq i} \frac{1}{\sqrt{1-2\theta_1 \sigma_j^2}} \right).
\end{align*}
Taking derivative with respect to $\theta_2$ yields 
\begin{align*}
    \frac{\partial \phi}{\theta_2}(\theta_1,0) = \frac{\sigma_i^2}{(1-2 \theta_1\sigma_i^2)^{3/2}}\left( \prod_{j\neq i} \frac{1}{\sqrt{1-2\theta_1 \sigma_j^2}} \right)
\end{align*}
Using the result of the last lemma, we get 
\begin{align*}
    p_i(\sigma) = \int_{-\infty}^0 \frac{\sigma_i^2}{(1-2 \theta\sigma_i^2)}\left( \prod_{j} \frac{1}{\sqrt{1-2\theta \sigma_j^2}} \right) d\theta
\end{align*}
% \paragraph{A sanity check:} It is easy to check that
% \begin{align}
%   \frac{\sigma_i^2}{(1-2 \theta\sigma_i^2)}\left( \prod_{j} \frac{1}{\sqrt{1-2\theta \sigma_j^2}} \right) = d\left(\prod_{j} \frac{1}{\sqrt{1-2\theta \sigma_j^2}}\right)/d\theta
% \end{align}
% Hence $\sum_{i=1}^p p_i(\sigma) = 1$. \\
Therefore, 
\begin{align} \label{eq:pnbound}
     p_n(\sigma) = \sigma_n^2 f_n(\sigma), \quad f_n(\sigma) :=  \int_{-\infty}^0 \frac{1}{(1-2\theta\sigma_n^2)^{\sfrac{3}{2}}\prod_{j\neq n}(1-2\theta \sigma_j^2)^{\sfrac{1}{2}}} 
 \end{align}
Since $\sum_{i=1}^n \sigma_i^2 = 1$ (see Eq.~\eqref{eq:unitnorm_app}),$f_n(\sigma)$ in the above formulation is minimized when the $\sigma_j^2$s are all equal for all $j\neq n$. Let $\sigma_n^2:=1/n-\delta$ and $\sigma_j^2:=1/n+\delta/(n-1)$ for all $j\neq i$. This allows us to establish a lowerbound on $f_n(\sigma)$ as
\begin{align} \label{eq:g}
f_n(\sigma) &\ge \int_0^\infty \underbrace{\left(1+2\theta(\frac{1}{n}-\delta)\right)^{-\frac{3}{2}}\left(1+2\theta(\frac{1}{n}+\frac{\delta}{n-1})\right)^{-\frac{n-1}{2}}}_{g(\delta):=}
\end{align}
Next lemma proves that $g(\delta)$ is a convex function for $\delta \in [0,1/n]$. 
\begin{lemma} \label{lemma:convexity_g}
  The function $g(\delta)$, which is defined in Eq.~\eqref{eq:g}, is a convex function on domain $\delta \in [0,1/n]$. 
\end{lemma}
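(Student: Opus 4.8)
The plan is to prove the slightly stronger statement that, for every fixed $\theta\ge 0$, the map $\delta\mapsto g(\delta)$ is \emph{log-convex} on $[0,1/n]$; since every log-convex function is convex, this yields Lemma~\ref{lemma:convexity_g}. Log-convexity has the extra advantage of being stable under the integration in $\theta$ that produces $f_n(\sigma)$ in Eq.~\eqref{eq:pnbound}, which is what one really wants downstream.

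First I would note that $g$ is smooth and strictly positive on the range of interest: for $\theta\ge 0$ and $\delta\in[0,1/n]$ we have $1+2\theta(\tfrac1n-\delta)\ge 1$ and $1+2\theta(\tfrac1n+\tfrac{\delta}{n-1})\ge 1$, so both factors of $g$ are well defined (including at the endpoint $\delta=1/n$, where $\sigma_n^2=0$) and $g>0$. Hence $\log g$ is finite and equals
\begin{align*}
  h(\delta):=\log g(\delta)= -\tfrac32\log\!\big(1+2\theta(\tfrac1n-\delta)\big)\;-\;\tfrac{n-1}{2}\log\!\big(1+2\theta(\tfrac1n+\tfrac{\delta}{n-1})\big).
\end{align*}
The key computation is that each summand of $h$ is convex in $\delta$: for any $p>0$ and any affine $a\delta+b$ positive on the interval, $\frac{d^2}{d\delta^2}\big(-p\log(a\delta+b)\big)=\frac{pa^2}{(a\delta+b)^2}\ge 0$. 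Applying this with $(p,a)=(\tfrac32,-2\theta)$ for the first term and $(p,a)=(\tfrac{n-1}{2},\tfrac{2\theta}{n-1})$ for the second gives $h''(\delta)\ge 0$ on $[0,1/n]$. Thus $h=\log g$ is convex, i.e.\ $g$ is log-convex; equivalently, from $g=e^{h}$ we get $g''=(h''+(h')^2)\,g\ge 0$ because $g>0$, so $g$ itself is convex on $[0,1/n]$, as claimed.

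I do not foresee a genuine obstacle here. The only points needing care are the sign bookkeeping of the negative exponents $-\tfrac32$ and $-\tfrac{n-1}{2}$ (handled cleanly by passing to $g=e^{h}$ with the explicit $h$ above rather than differentiating the product directly), the strict positivity of both bases across the whole interval including $\delta=1/n$, and recording that the estimate $h''\ge 0$ is uniform in $\theta\ge 0$, so that convexity in $\delta$ is inherited by $f_n(\sigma)=\int_0^\infty g(\delta)\,d\theta$.
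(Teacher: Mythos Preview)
Your argument is correct and takes a genuinely different route from the paper's. The paper proves $g''\ge 0$ by brute-force differentiation of the product: it introduces helper functions $h_1,\dots,h_4$, writes $g'=\theta h_1 h_2$ and $h_1'=\theta h_3 h_4$, and then checks term by term that $h_1,h_2,h_2',h_3,h_4\ge 0$ on the domain, whence $g''=\theta(h_1'h_2+h_1h_2')\ge 0$. Your approach instead observes that $\log g$ is a sum of two terms of the form $-p\log(a\delta+b)$ with $p>0$ and $a\delta+b>0$, each of which has second derivative $pa^2/(a\delta+b)^2\ge 0$; log-convexity then gives convexity via $g''=((\log g)''+((\log g)')^2)g\ge 0$. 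This is shorter, avoids the bookkeeping of the four helper functions, and makes the sign structure transparent because the negative exponents $-\tfrac32$ and $-\tfrac{n-1}{2}$ are absorbed into the positive prefactors $p$. As you note, log-convexity in $\delta$ is also stable under integration over $\theta$, so the stronger statement directly gives convexity of $\int_0^\infty g(\delta)\,d\theta$, which is what the paper actually uses to linearize at $\delta=0$; the paper's pointwise-in-$\theta$ convexity of course also suffices for that, but only after integrating the tangent inequality.
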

The convexity of $g(\delta)$ yields
\begin{align*}
g''(\delta)\ge 0\; \forall \delta &\implies g(\delta) \ge g(0) + \delta g'(0)
\end{align*}
The above bound allow us to bound $f_n(\sigma)$ as 
\begin{align*}
    f_n(\sigma)&\ge \int_0^\infty g(0) + \delta \int_0^\infty 2\theta(1+\frac{2\theta}{n})^{-\frac{n}{2}-2}  \\
 &= 1 + \delta\frac{2n}{n+2}
\end{align*}
Recall $\delta = \frac{1}{n}- \sigma_n^2$.
Combining the above inequality by Eq.~\eqref{eq:pnbound} concludes the proof of the Lemma~\ref{lemma:concentration_center}:
\begin{align*}
    \frac{1}{n} - p_n(\sigma) \leq \left(1-\frac{2n}{(n+2)}\sigma_n^2\right)\left(\frac{1}{n} - \sigma_n^2\right)
\end{align*}
% Notably, the lower-bound is tight for $f_i(\sigma)$ as it is illustrated in the following Figure.
%  \begin{figure}[h!]
%     \centering
%      \includegraphics[width=0.4\textwidth]{} 
% \end{figure}
\subsection{Proof of Lemma~\ref{lemma:convexity_g}}
We can show convexity of $g(\delta)$ by showing $g''(\delta)\ge 0$ for all $\delta\in(0,1)$. To this end, 
we define the helper function $h_1(\delta)$ (for the compactness of notations) as
\begin{align*}
h_1(\delta):=(1+2\theta(\frac{1}{n}-\delta))^{-5/2}(1+2\theta(\frac{1}{n}+\frac{\delta}{n-1}))^{-\frac{n-1}{2}-1}.
\end{align*}
Given $h_1$, the derivative of $g$ reads as 
\begin{align*}
% g(\delta)&:=\left(1+2\theta(\frac{1}{n}-\delta)\right)^{-3/2}\left(1+2\theta(\frac{1}{n}+\frac{\delta}{n-1})\right)\\
g'(\delta) &=  h_1(\delta)\left( (-\frac{3}{2})(-2\theta)\left(1+2\theta(\frac{1}{n}+\frac{\delta}{n-1})\right) - \frac{n-1}{2}\frac{2\theta}{n-1} \left(1+2\theta(\frac{1}{n}-\delta)\right)\right)\\
&= \theta h_1(\delta) \left(3 + \theta\frac{6}{n} + \theta\delta\frac{6}{n-1}-1-\theta\frac{2}{n}+2\theta\delta\right)\\
&= \theta h_1(\delta) \underbrace{\left(2+\theta \frac{4}{n}+\theta\delta\frac{2n+4}{n-1}\right)}_{h_2(\delta):=}\\
&= \theta h_1(\delta) h_2(\delta).
\end{align*}
One can readily check that $h_2'(\delta) \ge 0$. Hence, $h_1'(\delta)\ge 0$ ensures the convexity of $g(\delta)$. Consider the following helper 
$$h_3(\delta):=\left(1+2\theta(\frac{1}{n}-\delta)\right)^{-7/2}\left(1+2\theta(\frac{1}{n}+\frac{\delta}{n-1})\right)^{-\frac{n-1}{2}-2}.$$
Given $h_3$, we compute $h'_1$ as 
\begin{align*}
h_1'(\delta) &= h_3(\delta)\left((-\frac{5}{2})(-2\theta)(1+2\theta(\frac{1}{n}+\frac{\delta}{n-1})) + (-\frac{n-3}{2})\frac{2\theta}{n-1}(1+2\theta(\frac{1}{n}-\delta))\right)\\
&= h_3(\delta)\theta\left(5(1+2\theta/n+\frac{2\theta\delta}{n-1}) - \frac{n-3}{n-1}(1+2\theta/n-2\theta\delta)\right)\\
&=h_3(\delta)\theta\underbrace{\left(\frac{4n-2}{n-1}+2\theta\frac{4n-2}{n-1}+2\theta\delta\frac{n+2}{n-1}\right)}_{h_4(\delta):=}\\
&=\theta h_3(\delta) h_4(\delta).
\end{align*}
Clearly we have $h_3(\delta),h_4(\delta)\ge 0$ for all valid choices of $\delta$ which finishes the proof.

% \begin{lemma} \label{lemma:distance_bound}
%   For $H_{\ell}$ obeying Eq.~\eqref{eq:bn_recurrence}, the following holds 
%   \begin{align}
%       \| [\sigma_1(H_\ell), \dots, \sigma_n(H_{\ell})] - [1/\sqrt{n}, \dots, 1/\sqrt{n}] \|^2_2 \leq n^3 \left( V(H_\ell) \right)^2 
%   \end{align}
%  \end{lemma}
\section{Proof of Lemma~\ref{lemma:wv_bound}}
The main idea is based on a particular coupling of random matrices $W_\ell H_\ell$ and $G$. Consider the truncated SVD decomposition of $H_\ell$ as $H_\ell = U \diag(\sigma) V^\top $ where $U \in \R^{d\times n}$ and $V \in \R^{n \times n}$ are orthogonal matrices. Due to the orthogonality, the law of $W_\ell U$ is the same as those of $G V $. By coupling $W_\ell U = G V$, we get 
\begin{equation*}
\begin{aligned}
    \left(\W_2(\mu(W_\ell H_\ell), \mu(G/\sqrt{n}))\right)^2 & = \inf_{\text{all the couplings}}\E \|   W_\ell U \diag(\sigma) V^\top  - GVV^\top/\sqrt{n} \|_F^2 \\ 
    & \leq \E \| GV \left( \diag(\sigma) - I/\sqrt{n} \right) V^\top \|_F^2 \\ 
    & = \E \tr(GV \left( \diag(\sigma) - I/\sqrt{n} \right) V^\top V \left( \diag(\sigma) - I/\sqrt{n} \right) V^\top G^\top ) \\ 
    & =  \tr(V \left( \diag(\sigma) - I/\sqrt{n} \right) \left( \diag(\sigma) - I/\sqrt{n} \right) V^\top \E \left[ G^\top G \right] ) \\ 
    &= \E \tr( \left( \diag(\sigma) - I/\sqrt{n} \right) \left( \diag(\sigma) - I/\sqrt{n} \right) V^\top V )  \\ 
    &= \E \| \left( \diag(\sigma) - I/\sqrt{n} \right) \|_F^2 \\
    & = \E \left[  \sum_{i=1}^n (\sigma_i -1/\sqrt{n})^2 \right]  \\ 
&= \E \left[  \sum_{i=1}^n \left(\sigma_i^2 -1/n\right)^2/\left(\sigma_i + 1/\sqrt{n}\right)^2 \right] \\
& \leq n \E \left[  \sum_{i=1}^n \left(\sigma_i^2 -1/n\right)^2 \right] \\ 
& \leq n \E \left[ V^2(H_\ell) \right] \\ 
& \leq 2 n \E \left[ V(H_\ell) \right].
\end{aligned}
\end{equation*}

\section{Orthogonality gap for the iterative initialization} \label{sec:init}
Recall the proposed initialization for weights based on SVD decomposition $H_\ell = U_\ell \Sigma_\ell V_\ell^\top$.
\begin{align*}
    W_\ell = \frac{1}{\| \Sigma_\ell^{\sfrac{1}{2}}\|_F} V_\ell' \Sigma_\ell^{-\sfrac{1}{2}} U_\ell^\top.
\end{align*}
Here, we show that 
\begin{align}
     V(H_\ell) > V(W_\ell H_\ell) 
\end{align}
holds as long as $V(H_\ell) \neq 0$ and \ref{assume:lineary_indepdence}$(\alpha,\ell)$ holds. 
Given singular values $H_\ell$, we get 
\begin{equation}
    \label{eq:VHell}
    \begin{aligned} 
    V(H_\ell) &= \sum_{i=1}^n \left( \sigma_i^2 - \frac{1}{n}\right)^2 \\ 
    & = \sum_{i} \sigma_i^4 - 1/n, 
\end{aligned}
\end{equation}
where we used $\sum_{i=1}^n \sigma_i^2 = 1$ (see Eq.~\eqref{sec:perliminaries}).
Now, we compute $V(H_\ell)$ using the singular values. 
\begin{align*}
    W_\ell H_\ell = \frac{1}{\| \Sigma_\ell^{\sfrac{1}{2}}\|_F}  V_\ell' \Sigma^{\sfrac{1}{2}}_\ell V_\ell 
\end{align*}
Hence, the following holds 
\begin{align*}
     V(W_\ell H_\ell) & = \sum_{i=1}^n \left( \frac{\sigma_i}{\sum_j \sigma_j} - \frac{1}{n} \right)^2  \\ 
     & = \frac{1}{(\sum_{i=1}^n \sigma_i)^2} - 1/n
\end{align*}
Combining with Eq.~\eqref{eq:VHell}, we get 
\begin{align*}
    V(H_\ell) - V(W_\ell H_\ell) = \sum_{i=1}^n \sigma_i^4 - \frac{1}{(\sum_{i=1}^n \sigma_i)^2} .
\end{align*}
To show that the right side of the above equation is positive, we need to prove  
\begin{align*}
    \sum_{i} \sigma_i^4 \left(\sum_{i=1}^n \sigma_i\right)^2 > 1 = \left( \sum_{i} \sigma_i^2 \right)^4
\end{align*}
holds. Using Cauchy-Schwarz inequality, we get 
\begin{align*}
    \left( \sum_{i} \sigma_i^2 \right)^4 & = \left( \sum_{i} \sigma_i^{\sfrac{3}{2}} \sigma_i^{\sfrac{1}{2}} \right)^4 \\ 
    & \leq \left( \sum_{i} \sigma_i \right)^2 \left( \sum_{i} \sigma_i^2 \sigma_i \right)^2 \\ 
    & \leq  \left( \sum_{i} \sigma_i \right)^2 \left( \sum_{i} \sigma_i^4 \right)^2,
\end{align*}
where the equality in the above inequality is met only for $\sigma_i^2 = \frac{1}{n}$ (so $V(H_\ell) = 0$) under \ref{assume:lineary_indepdence}.

\section{Activation functions and the orthogonality} \label{sec:activations}
We have shown hidden representations become increasingly orthogonal through layers of BN MLPs with linear activations. 
We observed a similar results for MLP with ReLU activations and also a simple convolution network (see Figure~\ref{fig:relu_convnet}). Here, we elaborate more on the role activation functions in our analysis. 

\begin{figure}[h!]
     \centering
     \begin{subfigure}[b]{0.4\textwidth}
         \centering        \includegraphics[width=\textwidth]{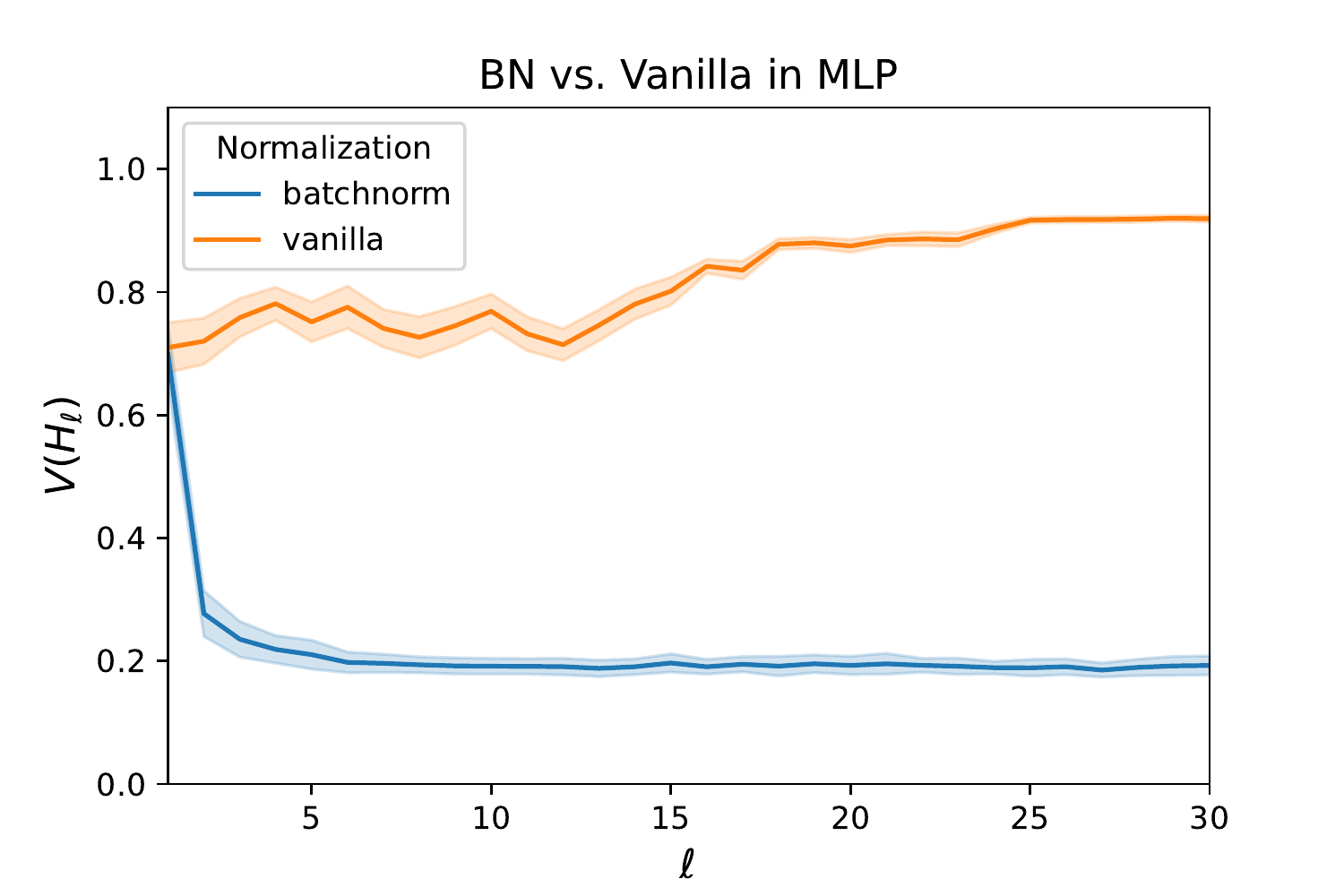}
         \caption{MLP}
         \label{fig:relu_mlp}
     \end{subfigure}
     \begin{subfigure}[b]{0.4\textwidth}
         \centering         \includegraphics[width=\textwidth]{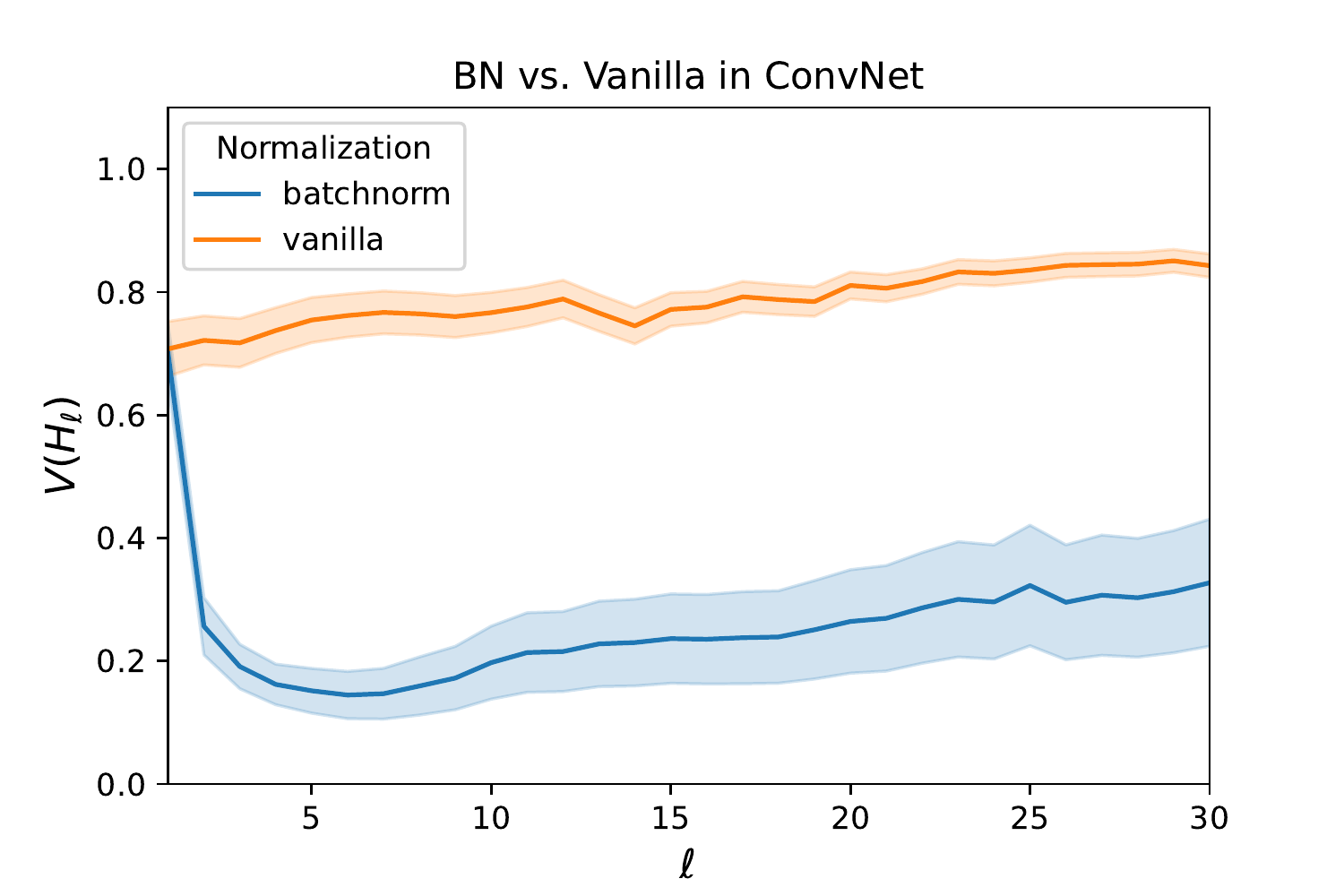}
         \caption{ConvNet}
         \label{fig:convnet}
     \end{subfigure}
        \caption{Orthogonality gap of layers for BatchNorm vs vanilla network with ReLU, for MLP and a basic convolutional network on CIFAR10. Batch size was set to $8$, the number of hidden layers for both networks was set to $30$, with $100$ width for MLP and $100$ channels for ConvNet in all hidden layers. For the ConvNet, the kernel size was set to $3$, across all hidden layers, with zero-padding to make convolutional feature maps equal sized. The standard deviations were computed for $50$ different randomly sampled batches.}
        \label{fig:relu_convnet}
\end{figure}
For an arbitrary activation function $F$, the hidden representations make the following Markov chain: 
\begin{align*}
    Q_{\ell+1} = BN(W_\ell F(Q_\ell)).
\end{align*}
\subsection{Conjecture}
Suppose that $\{ Q_\ell \}$ is \textit{ergodict} \citep{eberle2009markov}, and admits a unique invariant distribution denoted by $\nu$. We define function $L:\R^{d\times n}\to \R_+$ as
\begin{align}
     L(Q_\ell)  = \left\| Q_\ell^\top Q_\ell - \E_{Q\sim \nu} \left[  Q^\top Q \right] \right\|_F .
\end{align}
We conjecture that there exits an integer $k$ such that for $\ell>k$ the following holds: 
\begin{align}
    \E \left[ L(Q_\ell) \right] =\bigo\left( \frac{1}{ \sqrt{d}} \right).
\end{align}
Figure~\ref{fig:activations} experimentally validates that the above result holds for standard activations such as ReLU, sigmoid, and tanh. 

\begin{figure}[h!]
    \centering
    \includegraphics[width=0.45\textwidth]{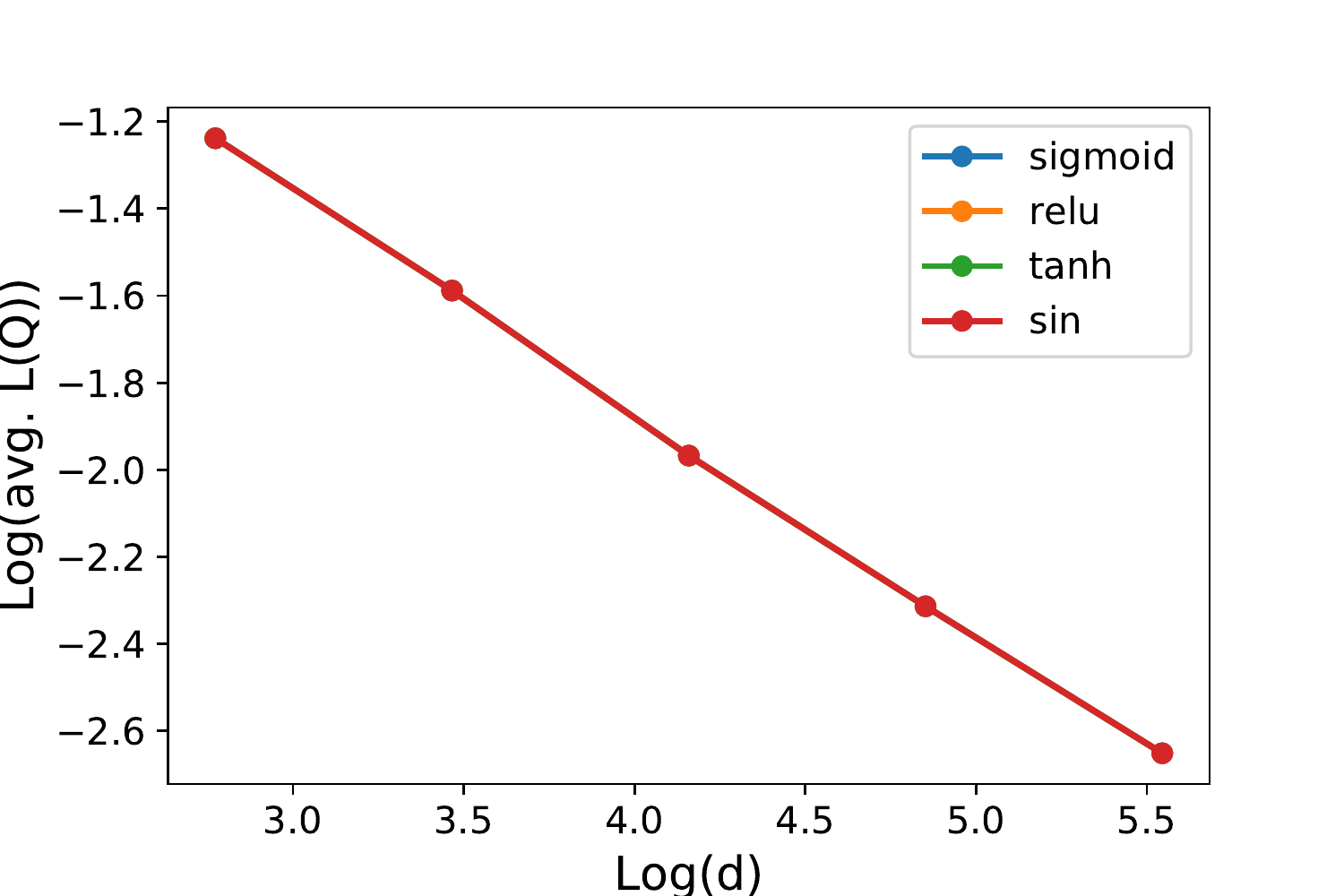}
    \caption{\footnotesize{\textbf{Validating the conjecture.} Horizontal axis: $\log(d)$; vertical axis: $\log(\frac{1}{1000}\sum_{k=1}^{1000} L(Q_k))$. The difference between plots for the activation functions is negligible, so not visible. }}
    \label{fig:activations}
\end{figure}

\subsection{Odd activation}
We observe that Theorem~\ref{thm:contraction} holds for odd activations such as $\sin$ and $\tanh$. For these activation result of Theorem~\ref{thm:contraction}. 
\begin{align*}
    \E \left[ V(H_{\ell+1}) \right]  = \bigo\left( (1-\alpha)^{\ell} + \frac{1}{\sqrt{d}}\right) 
\end{align*}
holds under \ref{assume:lineary_indepdence}$(\alpha,\ell)$. This also aligns with our conjecture for all activations as we observed that $ V = Q$ for these activations.   

\section{Experiments for convolutional networks}
\label{sec:experiments}
The introduced initialization scheme in Section~\ref{sec:optimization} extends to convnets.  In convolutional networks, hidden representation are in the tensor form $H_\ell \in \R^{d\times m \times m \times n}$ where $d$ is the number of filters and $k$ is the image dimensions. Let matrix $W_\ell^{d \times k^2}$ are weights of convolutions layer $\ell$ with kernel size $k$ and $d$ filters.  A 2D convolution is a matrix multiplication combined by projections as:
\begin{align}
    \text{conv2d}(W_\ell,H_\ell) = \text{fold}\left( W_{\ell} \underbrace{\text{unfold}(H_\ell)}_{H'_\ell}\right)
\end{align}
where the unfolding extract batches of $k\times k$ from images in tensor $H_\ell$, folding operation combines the computed convolution for the extracted batches. 
We use the SVD decomposition $H'_\ell = U_\ell \Sigma_\ell V_\ell$ to initialize weights $W_\ell$, exactly the same as Eq.~\eqref{eq:init_w}: 
\begin{align*}
    W_\ell = \frac{1}{\| \Sigma_\ell^{\sfrac{1}{2}}\|_F} V_\ell' \Sigma_\ell^{-\sfrac{1}{2}} U_\ell^\top,
\end{align*}
where $V_\ell'$ is a slice of $V_\ell$.
 We experimentally validate the performance of this initialization for two convolution networks with ReLU activations consist of 20 and 80 convolutions. Table~\ref{tab:convnet} outlines the details of these neural architectures. 
\begin{table}[h!]
    \centering
    \begin{tabular}{|c|c|}
    \hline
       The network with 20 layers  &   The network with 80 layers \\
       \hline
       Conv2d(3, 64)+    MaxPool2d(2) + RELU 
       & Conv2d(3, 64)+   MaxPool2d(2) + RELU  \\
       Conv2d(64, 192) + MaxPool2d(2) + ReLU &  Conv2d(64, 192) + MaxPool2d(2) + ReLU \\
       Conv2d(192, 256)
       + Conv2d(256, 256) +ReLU & Conv2d(192, 256)
        Conv2d(256, 256) + ReLU
        \\ 
        \hline 
        $\{$Conv2d(256, 256)+ReLU$\} \times 16$ &  $\{$Conv2d(256, 256)+ReLU$\} \times 76$
        \\ 
        \hline
        MaxPool2d(2) + ReLU & MaxPool2d(2) + ReLU \\ 
        Dropout(0.5) + Linear(1024,4096) + ReLU & Dropout(0.5) + Linear(1024,4096) + ReLU \\ 
        Dropout(0.5) + Linear(4096,4096) + ReLU & Dropout(0.5) + Linear(4096,4096) + ReLU \\ 
        Linear(4096,10) & Linear(4096,10)
        \\
        \hline
    \end{tabular}
    \caption{\footnotesize{Convolutional networks used in the experiment.} These neural architectures are obtained from adding layers to AlexNet~\citep{srivastava2014dropout}. The kernel size is 3 for all the convolutional layers. }
    \label{tab:convnet}
\end{table}

Figure~\ref{fig:convents} shows the performance of SGD (with stepsize $0.001$ and batchsize 30) for the proposed initialization. This initialization slows SGD for the shallow network with 20 compared to standard Xavier's initialization~\cite{glorot2010understanding}. However, this initialization outperforms Xavier's initialization when the depth is significantly large. This result substantiates the role of the orthogonality in training and shows that the experimental result of Figure~\ref{fig:cifar_orthogonal} extends to convolutional neural networks.      
\begin{figure}
    \centering
    \begin{tabular}{c c}
        20 layers &  80 layers\\
       \includegraphics[width=0.4\textwidth]{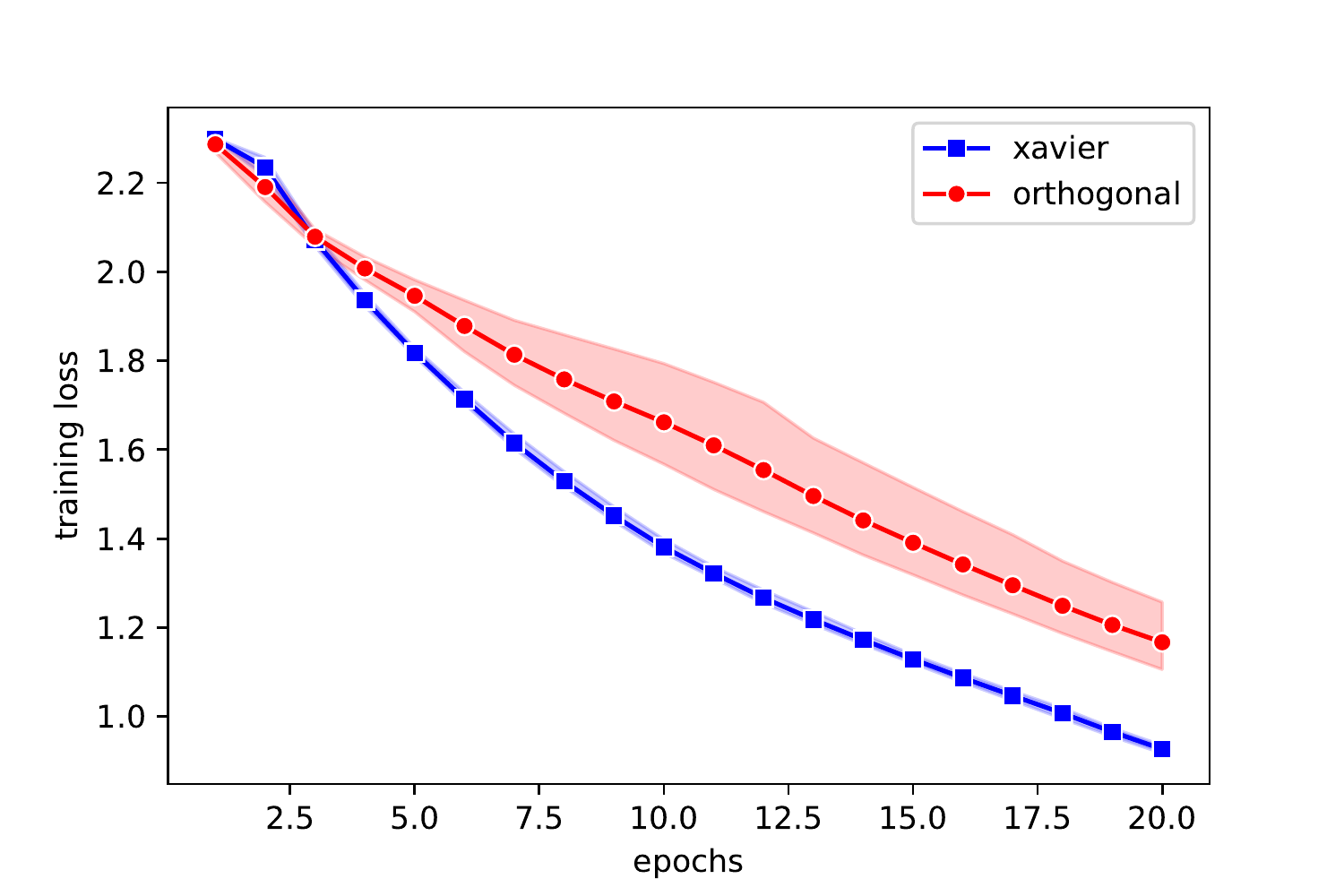}  & \includegraphics[width=0.4\textwidth]{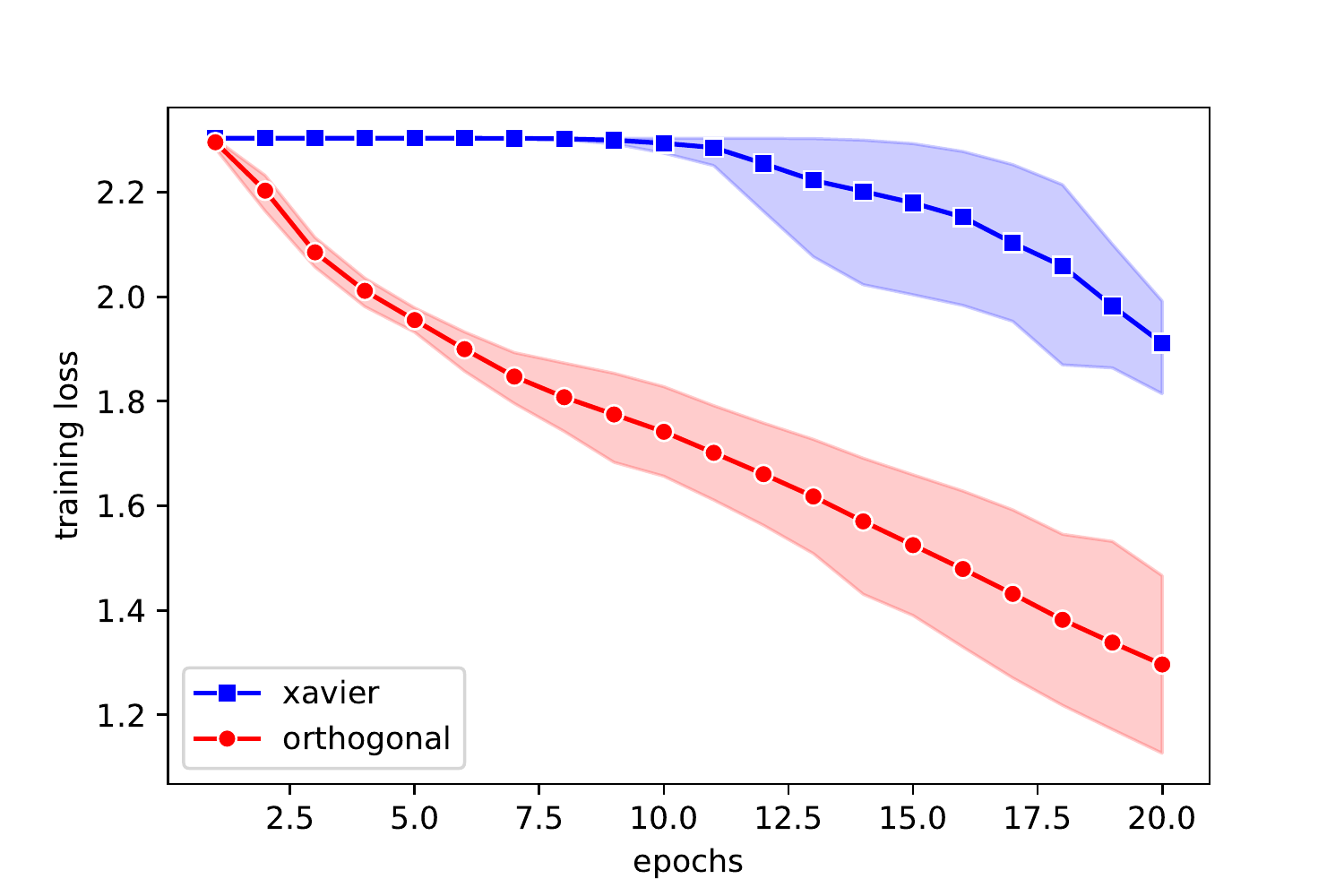}
    \end{tabular}
    \caption{\footnotesize{\textbf{Training convolutional networks.} Convergence of SGD for Xavier initialization \citep{glorot2010understanding} (in blue) and also the proposed initialization method that ensures the orthogonality of hidden representations (in red).  Mean and 95\% confidence interval of 4 independent runs.} }
    \label{fig:convents}
\end{figure}
\newpage

\end{document}